\documentclass{article}

\PassOptionsToPackage{numbers, compress}{natbib}



\usepackage[final]{neurips2021}

\usepackage[utf8]{inputenc} 
\usepackage[T1]{fontenc}    
\usepackage{hyperref}       
\usepackage{url}            
\usepackage{booktabs}       
\usepackage{amsfonts}       
\usepackage{nicefrac}       
\usepackage{microtype}      
\usepackage{xcolor}         
\usepackage{tcolorbox}
\usepackage{graphicx}

\usepackage{multirow}

\usepackage{bm}
\usepackage{amsmath,amssymb,amsfonts,amsthm}
\usepackage{mathtools}
\usepackage[inline]{enumitem}

\usepackage{caption}
\usepackage{subcaption}

\captionsetup[table]{skip=0.5em}

\usepackage{tikz}
\usetikzlibrary{bayesnet,calc}

\makeatletter
\newtheorem*{rep@lemma}{\rep@title}
\newcommand{\newreplemma}[2]{%
\newenvironment{rep#1}[1]{%
 \def\rep@title{#2 \ref{##1}}%
 \begin{rep@lemma}}%
 {\end{rep@lemma}}}

\newtheorem*{rep@theorem}{\rep@title}
\newcommand{\newreptheorem}[2]{%
\newenvironment{rep#1}[1]{%
 \def\rep@title{#2 \ref{##1}}%
 \begin{rep@theorem}}%
 {\end{rep@theorem}}}

\newtheorem*{rep@corollary}{\rep@title}
\newcommand{\newrepcorollary}[2]{%
\newenvironment{rep#1}[1]{%
 \def\rep@title{#2 \ref{##1}}%
 \begin{rep@corollary}}%
 {\end{rep@corollary}}}

\newtheorem*{rep@proposition}{\rep@title}
\newcommand{\newrepproposition}[2]{%
\newenvironment{rep#1}[1]{%
 \def\rep@title{#2 \ref{##1}}%
 \begin{rep@proposition}}%
 {\end{rep@proposition}}}
\makeatother

\newtheorem{lemma}{Lemma}[section]
\newtheorem{theorem}{Theorem}[section]
\newtheorem{corollary}{Corollary}[section]

\theoremstyle{remark}

\theoremstyle{remark}

\theoremstyle{definition}
\newtheorem{definition}{Definition}[section]

\theoremstyle{plain}

\newtheorem{proposition}{Proposition}

\newreplemma{lemma}{Lemma}
\newreptheorem{theorem}{Theorem}
\newrepcorollary{corollary}{Corollary}
\newrepproposition{proposition}{Proposition}

\DeclareMathOperator\argmin{arg\,min}

\newcommand\CPr[2]{\ensuremath{\mathbb{P}\!\left(#1\,\middle\vert\,#2\right)}}

\newcommand\E[1]{\ensuremath{\mathbb{E}\!\left[#1\right]}}
\newcommand\EE[2]{\ensuremath{\mathbb{E}_{#1}\!\left[#2\right]}}
\newcommand\CE[2]{\ensuremath{\mathbb{E}\!\left[#1\,\middle\vert\,#2\right]}}
\newcommand\CEE[3]{\ensuremath{\mathbb{E}_{#1}\!\left[#2\,\middle\vert\,#3\right]}}

\newcommand\V[1]{\ensuremath{\mathbb{V}\!\left[#1\right]}}

\newcommand\norm[1]{\ensuremath{\left\lvert\left\lvert#1\right\rvert\right\rvert}}

\newcommand\bth{\ensuremath{\bm{\theta}}}

\def\gradthi_#1{\ensuremath{\nabla_{\theta_{#1}}}}

\def\gradbth{\ensuremath{\nabla_{\bth}}}

\def\p(#1){\ensuremath{p\!\left(#1\right)}}
\def\cp(#1|#2){\ensuremath{p\!\left(#1\,\middle\vert\, #2\right)}}

\def\pp_#1(#2){\ensuremath{p_{#1}\!\left(#2\right)}}
\def\cpp_#1(#2|#3){\ensuremath{p_{#1}\!\left(#2\,\middle\vert\, #3\right)}}

\def\pt_#1(#2){\ensuremath{\pi_{#1}\!\left(#2\right)}}
\def\cpt_#1(#2|#3){\ensuremath{\pi_{#1}\!\left(#2\,\middle\vert\, #3\right)}}

\def\dot<#1, #2>{\ensuremath{\left\langle#1, #2\right\rangle}}

\renewcommand\Re{\ensuremath{\mathbb{R}}}

\newcommand{\indep}{\rotatebox[origin=c]{90}{$\models$}}

\title{Factored Policy Gradients: Leveraging Structure\\for Efficient Learning in MOMDPs}

\author{%
  Thomas Spooner\\
  J.\ P.\ Morgan AI Research\\
  \texttt{thomas.spooner@jpmorgan.com} \\
  \And
  Nelson Vadori\\
  J.\ P.\ Morgan AI Research\\
  \texttt{nelson.vadori@jpmorgan.com} \\
  \And
  Sumitra Ganesh\\
  J.\ P.\ Morgan AI Research\\
  \texttt{sumitra.ganesh@jpmorgan.com}
}

\begin{document}
\maketitle

\begin{abstract}
Policy gradient methods can solve complex tasks but often fail when the dimensionality of the action-space or objective multiplicity grow very large. This occurs, in part, because the variance on score-based gradient estimators scales quadratically. In this paper, we address this problem through a factor baseline which exploits independence structure encoded in a novel action-target influence network. Factored policy gradients (FPGs), which follow, provide a common framework for analysing key state-of-the-art algorithms, are shown to generalise traditional policy gradients, and yield a principled way of incorporating prior knowledge of a problem domain's generative processes. We provide an analysis of the proposed estimator and identify the conditions under which variance is reduced. The algorithmic aspects of FPGs are discussed, including optimal policy factorisation, as characterised by minimum biclique coverings, and the implications for the bias-variance trade-off of incorrectly specifying the network. Finally, we demonstrate the performance advantages of our algorithm on large-scale bandit and traffic intersection problems,  providing a novel contribution to the latter in the form of a spatial approximation.
\end{abstract}

\section{Introduction}
Many sequential decision-making problems in the real-world have objectives that
can be naturally decomposed into a set of conditionally independent targets.
Control of water reservoirs, energy consumption optimisation, market making,
cloud computing allocation, sewage flow systems, and robotics are but a few
examples~\cite{roijers:2013:survey}. While many optimisation methods have been
proposed~\cite{mannor:2004:geometric,prashanth:2016:variance} --- perhaps most
prominently using Lagrangian scalarisation~\cite{tessler:2018:reward} ---
multi-agent learning has emerged as a promising new paradigm for sample-efficient
learning~\cite{dusparic:2012:autonomic}.
In this class of algorithms, the multi-objective learning problem is cast into
a centralised, co-operative stochastic game in which co-ordination is achieved
through global coupling terms in each agent's objective/reward functions. For
example, a grocer who must manage their stock could be decomposed into a
collection of sub-agents that each manage a single type of produce, but are
subject to a global constraint on inventory. This approach has been shown to be
very effective in a number of
domains~\cite{lee:2002:multi,van:2014:novel,patel:2018:optimizing,mannion:2018:reward,wang:2019:multi},
but presents both conceptual and technical issues.

The transformation of a multi-objective Markov decision process
(MOMDP)~\cite{roijers:2013:survey}
into a stochastic game is a non-trivial design
challenge. In many cases there is no clear delineation between agents in the
new system, nor an established way of performing the decomposition.  What's
more, it's unclear in many domains that a multi-agent perspective is
appropriate, even as a technical trick. For example, the concurrent problems
studied by \citet{silver:2013:concurrent} exhibit great levels of homogeneity,
lending themselves to the use of a shared policy which conditions on contextual
information. The key challenge that we address in this paper is precisely how
to scale these single-agent methods --- specifically, policy gradients --- in a
\emph{principled} way. As we shall see, this study reveals that existing
methods in both single- and multi-agent multi-objective optimisation can be
formulated as special cases of a wider family of algorithms we entitle
\emph{factored policy gradients}. The contributions of this paper are summarised below:
\begin{enumerate}
    \item We introduce \emph{\textbf{influence networks}} as a framework for
        modelling probabilistic relationships between actions and objectives in an
        MOMDP, and show how they can be combined with \emph{\textbf{policy
        factorisation}} via graph partitioning.
    \item We propose a new control variate --- the \emph{\textbf{factor baseline}} --- that exploits
        independence structures within a (factored) influence network, and show
        how this gives rise to a novel class of algorithms to which we ascribe the name
        \emph{\textbf{factored policy gradients}}.
    \item We show that FPGs \emph{\textbf{generalise traditional policy
        gradient estimators}} and provide a common framework for analysing
        state-of-the-art algorithms in the literature including
        action-dependent baselines and counterfactual policy gradients.
    \item The \emph{\textbf{variance properties}} of our family of algorithms are studied, and \emph{\textbf{minimum factorisation}} is put forward as a principled way of applying FPGs, with theoretical results around the existence and uniqueness of the characterisation.
    \item The final contribution is to illustrate the effectiveness of our
        approach over traditional estimators on two
        \emph{\textbf{high-dimensional benchmark domains}}.
\end{enumerate}

\subsection{Related Work}
\textbf{Policy gradients.}
Variance reduction techniques in the context of policy gradient methods have
been studied for some time. The seminal work of \citet{konda:2000:actor} was
one of the earliest works that identified the use of a critic as beneficial for
learning. Since then, baselines (or, control variates) have received much
attention. In \citeyear{weaver:2001:optimal}, \citet{weaver:2001:optimal}
presented the first formal analysis of their properties, and later
\citet{greensmith:2004:variance} proved several key results around optimality.
More recently, these techniques have been extended to include action-dependent
baselines~\cite{thomas:2017:policy,liu:2018:action,grathwohl:2018:backpropagation,wu:2018:variance,foerster:2018:counterfactual},
though the source of their apparent success has been questioned by
some~\cite{tucker:2018:mirage} who suggest that subtle implementation details
were the true driver. It has also been shown that one can reduce variance by
better accounting for the structure of the action-space, such as
bounds~\cite{chou:2017:improving,fujita:2018:clipped} or more general
topological properties~\cite{eisenach:2019:marginal}. The SVRPG approach of
\citet{papini:2018:stochastic} also addresses variance concerns in policy
gradients by leveraging advances in supervised learning, and the generalised
advantage estimator of \citet{schulman:2015:high} has been proposed as a method
for reducing variance in actor-critic methods with fantastic empirical results;
both of these can be combined with baselines and the techniques we present
in this work.
\textbf{Factorisation.} In a related, but distinct line of work,
factorisation has been proposed to better leverage the transition structure of MDPs; see
e.g.~\cite{boutilier:1995:planning,guestrin:2003:efficient,tamar:2012:integrating}.
Indeed, the notion of causality has also been utilised in work by
\citet{jonsson:2006:causal}. Most recently, \citet{oliehoek:2012:influence}
presented an elegant framework for harnessing the \emph{influence} of other
agents (from the perspective of self) in multi-agent systems. This approach is
complementary to the work presented in this paper, and more recent extensions
have significantly advanced the
state-of-the-art~\cite{suau:2019:influence,congeduti:2020:loss,oliehoek:2021:sufficient};
we build upon these principles. There is also a long line of research on ``influence diagrams'' that is pertinent to this work. While the majority of this effort has been focused on dynamic programming, the ideas are very closely related to ours and indeed we see this work as a natural extension of these concepts~\cite{tatman:1990:dynamic}.
\textbf{Miscellaneous.} Causal/graphical
modelling has seen past applications in reinforcement learning~\cite{ghavamzadeh:2015:bayesian}.
Indeed, our proposed influence network is related to, but distinct from, the
action influence models introduced by \citet{madumal:2020:explainable} for
explainability. There, the intention was to construct policies that can justify
actions with respect to the observation space. Here, the intention was to
exploit independence structure in MOMDPs for scalability and efficiency.

\section{Background}
A regular discrete-time Markov decision process (MDP) is a tuple $\mathcal{M}
\doteq \left(\mathcal{S}, \mathcal{A}, \mathcal{R}, p, p_0\right)$, comprising:
a \emph{state space} $\mathcal{S}$, \emph{action space} $\mathcal{A}$, and set
of \emph{rewards} $\mathcal{R} \subseteq \mathbb{R}$. The dynamics of the MDP
are driven by an \emph{initial state distribution} such that $s_0 \sim \pp_0(\cdot)$ and
a stationary \emph{transition kernel} where
$\left(r_t,s_{t+1}\right) \sim \cp(\cdot, \cdot|s_t,\bm{a}_t)$ satisfies the
Markov property, $\cp(r_t,s_{t+1}|h_t) = \cp(r_t,s_{t+1}|s_t,\bm{a}_t)$, for
any history $h_t \doteq \left(s_0, \bm{a}_0, r_0, s_1, \dots, s_t,
\bm{a}_t\right)$.
Given an MDP, a (stochastic) policy, parameterised by $\bm{\theta} \in
\mathbb{R}^n$, is a mapping $\pi_{\bth} : \mathcal{S} \times \mathbb{R}^n \to
\mathcal{P}\!\left(\mathcal{A}\right)$ from states and weights to the set of
probability measures on $\mathcal{A}$. The conditional probability density of
an action $\bm{a}$ is denoted by $\cpt_\bth(\bm{a}|s) \doteq \CPr{\bm{a} \in
\textrm{d} \bm{a}}{s, \bth}$ and we assume throughout that $\pi_{\bm{\theta}}$
is continuously differentiable with respect to $\bm{\theta}$. For a given
policy, the \emph{return} starting from time $t$ is defined as the discounted
sum of future rewards, $G_t \doteq \sum_{k=0}^T \gamma^k r_{t+k+1}$, where
$\gamma \in [0, 1]$ is the discount rate and $T$ is the terminal
time~\cite{sutton:2018:reinforcement}. \emph{Value functions} express the
expected value of returns generated from a given state or state-action pair
under the MDP's transition dynamics and policy $\pi$: that is, $v_\pi(s) \doteq
\CEE{\pi}{G_t}{s_t = s}$ and $q_\pi(s, \bm{a}) \doteq \CEE{\pi}{G_t}{s_t = s,
\bm{a}_t = \bm{a}}$. The objective in \emph{control} is to find a policy that
maximises $v_\pi$ for all states with non-zero measure under $p_0$, denoted by
the Lesbesgue integral $J\!\left(\bth\right) \doteq \EE{p_0}{v_{\pi_{\bth}}(s_0)} =
\int_{\mathcal{S}} v_{\pi_{\bth}}(s_0) \, \textrm{d} p_0\!\left(s_0\right)$.

\subsection{Policy Search}
In this paper, we focus on policy gradient methods which optimise the
parameters $\bth$ directly. This is achieved, in general, by performing
gradient ascent on $J\!\left(\bth\right)$, for which \citet{sutton:2000:policy}
derived
\begin{equation}\label{eq:pgrad}
    \gradbth J\!\left(\bth\right) =
    \EE{\pi_{\bth},\rho_{\pi_{\bth}}}{\left(q_{\pi_{\bth}}\!\left(s,
    \bm{a}\right) - b\!\left(s\right)\right) \bm{z}},
\end{equation}
where $\bm{z} \doteq \gradbth\ln\cpt_{\bth}(\bm{a}|s)$ is the policy's score
vector, $\rho_{\pi_{\bth}}\!\left(s\right) \doteq
\int_\mathcal{S}\sum_{t=0}^\infty \gamma^t \cp(s_t = s|\textrm{d}s_0,
\pi_{\bth})$ denotes the (improper) discounted-ergodic occupancy measure, and
$b\!\left(s\right)$ is a state-dependent baseline (or, control
variate)~\cite{peters:2006:policy}. Here, $\cp(s_t = s|s_0,\pi_{\bth})$ is the
probability of transitioning from $s_0 \to s$ in $t$ steps under $\pi_{\bth}$.
\autoref{eq:pgrad} is convenient for a number of reasons:
\begin{enumerate*}
    \item it is a score-based estimator~\cite{mohamed:2020:monte}; and
    \item it falls under the class of stochastic approximation
        algorithms~\cite{borkar:2009:stochastic}.
\end{enumerate*}
This is important as it means $q_\pi\!\left(s, \bm{a}\right)$ may be replaced
by \emph{any} unbiased quantity, say $\psi : \mathcal{S}\times\mathcal{A}\to\Re$, such that $\EE{\pi, \rho_\pi}{\psi\!\left(s,
\bm{a}\right)} = q_\pi\!\left(s, \bm{a}\right)$, while retaining
convergence guarantees. It also implies that optimisation can be performed
using stochastic gradient estimates, the standard variant of which is defined
below.

\begin{definition}[\textbf{VPGs}]\label{def:vpg}
    The \emph{vanilla policy gradient} estimator for target-baseline pair $\left(\psi, b\right)$ is denoted
    \begin{equation}
        \bm{g}^\textsc{V}\!\left(s, \bm{a}\right) \doteq \left[\psi\!\left(s,
        \bm{a}\right) - b\!\left(s\right)\right] \bm{z},
    \end{equation}
    where $\gradbth J\!\left(\bth\right) = \EE{\pi_{\bth},
    \rho_{\pi_{\bth}}}{\bm{g}^\textsc{V}\!\left(s, \bm{a}\right)}$.
\end{definition}

\subsection{Factored (Action-Space) MDPs}
In this paper, we consider the class of MDPs in which the action-space factors
into a product, $\mathcal{A} \doteq \bigotimes\nolimits_{i=1}^n \mathcal{A}_i =
\mathcal{A}_1 \times\dots\times \mathcal{A}_n$, for some $n$. This is satisfied
trivially when $n=1$ and $\mathcal{A}_1 = \mathcal{A}$, but also holds in many
common settings, such as $\mathcal{A} \doteq \Re^n$, which factorises $n$ times
as $\bigotimes_{i=1}^n \Re$. This is equivalent to requiring that actions,
$\bm{a} \in \mathcal{A}$, admit a ``subscript'' operation; without
necessarily having $\mathcal{A}$ be a vector space. For example, one could have an action-space of the form $\mathcal{A} \doteq \Re\times\mathbb{N}$ such that, for any $\bm{a}\in\mathcal{A}$, $a_1 \in \Re$ and $a_2 \in \mathbb{N}$. To this end, we introduce the notion of partition maps which will feature throughout the paper.

\begin{definition}[\textbf{Partition Map}]\label{def:partition_map}
    Define $\mathcal{X} \doteq \bigotimes\nolimits_{i=1}^n\mathcal{X}_i$ and $J
    \subseteq [n]$ with $\mathcal{X}_J \doteq \bigotimes\nolimits_{j\in
    J}\mathcal{X}_j$ such that a partition map (PM) for a pair
    $\left(\mathcal{X}, J\right)$ is a function $\sigma :
    \mathcal{X} \to \mathcal{X}_J$ with complement $\bar\sigma :
    \mathcal{X} \to \mathcal{X}_{[n]\setminus J}$.
\end{definition}

Partition maps are an extension of the canonical projections of the product
topology, and are equivalent to the scope operator used by
\citet{tian:2020:towards}. For example, if $\left(a_1, a_2, a_3\right) \doteq
\bm{a} \in \mathcal{A} \doteq \Re^3$ denotes a three-dimensional real
action-space, then one possible PM is given by $\sigma\!\left(\bm{a}\right) =
\left(a_1, a_3\right)$ with complement $\bar\sigma\!\left(\bm{a}\right) =
\left(a_2\right)$. Note that there should always exist a unique inverse operation
that recovers the original space; in this case, it would be expressed as
$f\!\left(\left(a_1, a_3\right), \left(a_2\right)\right) = \left(a_1, a_2,
a_3\right)$.

\section{Influence Networks}
Consider an MOMDP with scalarised objective given by
\begin{equation}\label{eq:linear_psi}
    J\!\left(\bth\right) \doteq \EE{p_0}{\psi\!\left(s, \bm{a}\right) \doteq
    \sum_{j=1}^m \lambda_j\psi_j\!\left(s,
    \sigma_j\!\left(\bm{a}\right)\right)},
\end{equation}
where $\lambda_j \in \Re$ for all $1 \leq j \leq m$ and each $\psi_j\!\left(s,
\sigma_j\!\left(\bm{a}\right)\right)$ denotes some target that depends on a
single partition of the action components. Traditional MDPs can be seen as a
special case in which $m=1$, and $\psi = \psi_1 \doteq q_\pi$. The vector $\bm{\psi}\!\left(s, \bm{a}\right)$ comprises the
concatenation of all $m$ targets and each partitioning is dictated by the
\emph{non-empty} maps $\sigma_j\!\left(\bm{a}\right)$, the form of which is
intrinsic to the MOMDP. For convenience, let us denote the collection of
targets comprising $\psi\!\left(s, \bm{a}\right)$ by
\begin{equation}\label{eq:psi}
    \Psi \doteq \left\{\psi_j : \psi\!\left(s,
    \bm{a}\right) = \dot<\bm{\lambda}, \bm{\psi}\!\left(s,
    \bm{a}\right)>\right\}.
\end{equation}

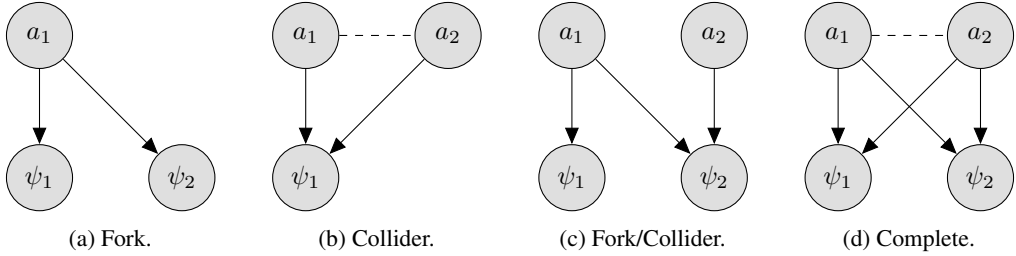
\begin{figure}[t]
    \centering
    \begin{subfigure}[b]{0.24\linewidth}
        \centering
        \begin{tikzpicture}
            \node[obs, minimum size=25pt] (psi1) {$\psi_2$};
            \node[obs, left=of psi1, minimum size=25pt] (psi2) {$\psi_1$};

            \node[obs, above=of psi2, minimum size=25pt] (a1) {$a_{1}$};

            \edge {a1} {psi1};
            \edge {a1} {psi2};


        \end{tikzpicture}

        \caption{Fork.}\label{fig:graph:prototypes:fork}
    \end{subfigure}
    \hfill
    \begin{subfigure}[b]{0.24\linewidth}
        \centering
        \begin{tikzpicture}
            \node[obs, minimum size=25pt] (a2) {$a_{2}$};
            \node[obs, left=of a2, minimum size=25pt] (a1) {$a_{1}$};

            \node[obs, below=of a1, minimum size=25pt] (psi1) {$\psi_1$};

            \edge {a1, a2} {psi1};


            \path (a1) edge[dashed] (a2);
        \end{tikzpicture}

        \caption{Collider.}\label{fig:graph:prototypes:collider}
    \end{subfigure}
    \hfill
    \begin{subfigure}[b]{0.24\linewidth}
        \centering
        \begin{tikzpicture}
            \node[obs, minimum size=25pt] (a2) {$a_{2}$};
            \node[obs, left=of a2, minimum size=25pt] (a1) {$a_{1}$};

            \node[obs, below=of a1, minimum size=25pt] (psi1) {$\psi_1$};
            \node[obs, below=of a2, minimum size=25pt] (psi2) {$\psi_2$};

            \edge {a1} {psi1};
            \edge {a1, a2} {psi2};


        \end{tikzpicture}

        \caption{Fork/Collider.}\label{fig:graph:prototypes:fc}
    \end{subfigure}
    \hfill
    \begin{subfigure}[b]{0.24\linewidth}
        \centering
        \begin{tikzpicture}
            \node[obs, minimum size=25pt] (a2) {$a_{2}$};
            \node[obs, left=of a2, minimum size=25pt] (a1) {$a_{1}$};

            \node[obs, below=of a1, minimum size=25pt] (psi1) {$\psi_1$};
            \node[obs, below=of a2, minimum size=25pt] (psi2) {$\psi_2$};

            \edge {a1, a2} {psi1};
            \edge {a1, a2} {psi2};


            \path (a1) edge[dashed] (a2);
        \end{tikzpicture}

        \caption{Complete.}\label{fig:graph:prototypes:complete}
    \end{subfigure}

    \caption{Influence network prototypes and action-target junction
        patterns~\cite{pearl:2009:causality,pearl:2018:book}. Edges depict
        dependencies between factors $a_i \in \mathcal{A}$ and targets $\psi_j
        \in \Psi$; and dashed lines a partition induced by the minimum
    factorisation.}\label{fig:graph:prototypes}
\end{figure}

The intuition behind FPGs is derived from the observation
that each factor of the action-space only \emph{influences} a subset of the $m$
targets. Take, for example, \autoref{fig:graph:prototypes:fc} which depicts an
instance of an influence network between a 2-dimensional action vector
and a 2-dimensional target. The edges suggest that $a_1$ affects the value of
both $\psi_1$ and $\psi_2$, whereas $a_2$ only affects $\psi_2$. This
corresponds to an objective of the form $\lambda_1\psi_1\!\left(s, a_1\right) +
\lambda_2\psi_2\!\left(s, \bm{a}\right)$, where each goal's domain derives from
the edges of the graph. This is formalised in \autoref{def:influence_net}
below.

\begin{definition}[\textbf{Influence Network}]\label{def:influence_net}
    A bipartite graph $\mathcal{G}\!\left(\mathcal{M}, \Psi\right) \doteq
    (I_\mathcal{A}, I_\Psi, E)$ is said to be the influence network of an MDP
    $\mathcal{M}$ and target set $\Psi$ if for $I_\mathcal{A} \doteq
    \left[\left\lvert\mathcal{A}\right\rvert\right]$ and $I_\Psi \doteq
    \left[\left\lvert\Psi\right\rvert\right]$, the presence of an edge, $e\in
    E$, between nodes $i \in I_\mathcal{A}$ and $j \in I_\Psi$ defines a causal
    relationship between the $i^{\textrm{th}}$ factor of $\mathcal{A}$ and the
    $j^{\textrm{th}}$ target $\psi_j\!\left(s,
    \sigma_j\!\left(\bm{a}\right)\right)$.
\end{definition}

An influence network can be seen as a structural equation
model~\cite{pearl:2009:causality} in which each vertex in $I_\mathcal{A}$ has a
single, unique parent which is exogenous and drives the randomness in action
sampling, and each vertex in $I_\Psi$ has parents only in the set
$I_\mathcal{A}$ as defined by the set of edges $E$. The structural equations along each edge
$(i, j) \in E$ are given by the target functions themselves and the partition
maps $\sigma_j$ mirror the parents of each node $j$. Some examples of influence
networks are illustrated in \autoref{fig:graph:prototypes}; see also the
appendix. We now define the key concept of influence matrices.

\begin{definition}[\textbf{Influence Matrix}]
    Let $\bm{K}_\mathcal{G}$ denote the \emph{biadjacency matrix} of an
    influence network $\mathcal{G}$, defined as the $\lvert I_\mathcal{A}
    \rvert \times \lvert I_\Psi \rvert$ boolean matrix with $K_{ij} = 1 \iff
    (i, j) \in E$ for $i \in I_\mathcal{A}$ and $j \in I_\Psi$.
\end{definition}

Together, these definitions form a calculus for expressing the
relationships between the factors of an action-space and the targets of an
objective of the form in \autoref{eq:linear_psi}. We remark that, from an
algorithmic perspective, we are free to choose between two representations:
graph-based, or partition map-based. The duality between $\mathcal{G}$ and
$\bm{K}$, and the set $\{\sigma_j : j \in I_\Psi\}$, is intrinsic to our choice
of notation and serves as a useful correspondence during analysis.

\subsection{Policy Factorisation}
Influence networks capture the relationships between $\mathcal{A}$ and
$\Psi$, but policies are typically defined over groups of actions rather than
the individual axes of $\mathcal{A}$. Consider, for example, a multi-asset
trading problem in which an agent must quote buy and sell prices for each of
$n$ distinct assets~\cite{gueant:2019:deep,spooner:2020:robust}.  There is a
natural partitioning between each pair of prices and the $n$ sources of
profit/loss, and one might therefore define the policy as a product of $n$
bivariate distributions as opposed to a full joint, or fully factored model.
This choice over \emph{policy factorisation} relates to the independence
assumptions we make on the distribution $\pi_{\bth}$ for the sake of
performance. Indeed, in the majority of the literature, policies are defined
using an isotropic distribution~\cite{wu:2018:variance} since there is no
domain knowledge to motivate more complex covariance structure. We formalise
this below.

\begin{definition}[\textbf{Policy Factorisation}]\label{def:policy_factorisation}
    An $n$-fold \emph{policy factorisation}, $\Sigma \doteq \left\{\sigma^\pi_i : i \in
    \left[n\right]\right\}$, is a set of disjoint partition maps
    that form a complete partitioning over the action space.
\end{definition}

The definition above provides a means of expressing \emph{any} joint policy
distribution in terms of PMs,
\begin{equation}\label{eq:policy_factorisation}
    \cpt_{\bth}(\bm{a}|s) \doteq \prod_{i=1}^n
    \cpt_{i,\bth}(\sigma^\pi_i\!\left(\bm{a}\right)|s),
\end{equation}
where $\sigma_i^\pi\in\Sigma$ and $n=\left\lvert\Sigma\right\rvert$. This
corresponds to a transformation of the underlying influence network where the
action vertices are grouped under the $n$ policy factors and, for any $i, j \in
[n]$, $i\ne j$, we have mutual independence:
$\sigma^\pi_i\!\left(\bm{a}\right)~\indep~\sigma_j^\pi\!\left(\bm{a}\right)$.
This is captured in the following concept.

\begin{definition}[\textbf{Factored Influence Network}]\label{def:fin}
    For a given influence network $\mathcal{G}$ and policy factorisation
    $\Sigma$, we define a \emph{factored influence network},
    $\mathcal{G}_\Sigma$, by replacing $I_\mathcal{A}$ with $I_\Sigma$, the set
    of partitioned vertices, and merge the corresponding edges to give
    $E_\Sigma$. Similarly, denote by $\bm{K}_\Sigma$ the influence matrix with
    respect to the $\Sigma$-factorisation.
\end{definition}

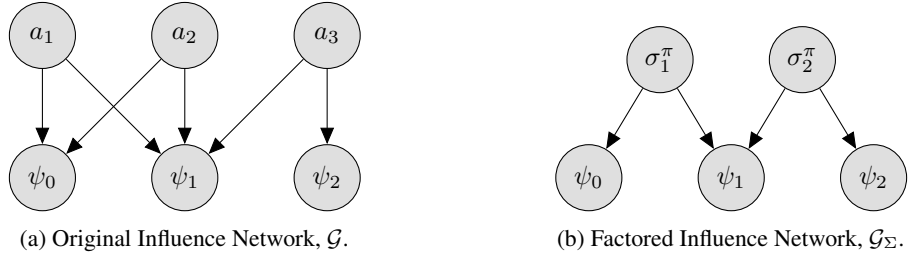
\begin{figure}
    \centering
    \begin{subfigure}[b]{0.48\linewidth}
        \centering
        \begin{tikzpicture}
            \node[obs, minimum size=25pt] (psi1) {$\psi_1$};
            \node[obs, left=of psi1, minimum size=25pt] (psi2) {$\psi_0$};
            \node[obs, right=of psi1, minimum size=25pt] (psi3) {$\psi_2$};

            \node[obs, above=of psi2, minimum size=25pt] (a1) {$a_{1}$};
            \node[obs, above=of psi1, minimum size=25pt] (a2) {$a_{2}$};
            \node[obs, above=of psi3, minimum size=25pt] (a3) {$a_{3}$};

            \edge {a1, a2} {psi1};
            \edge {a1, a2} {psi2};
            \edge {a3} {psi1, psi3};
        \end{tikzpicture}

        \caption{Original Influence Network, $\mathcal{G}$.}
        \label{fig:policy_factorisation:unfactored}
    \end{subfigure}
    \hfill
    \begin{subfigure}[b]{0.48\linewidth}
        \centering
        \begin{tikzpicture}
            \node[obs, minimum size=25pt] (psi1) {$\psi_1$};
            \node[obs, left=of psi1, minimum size=25pt] (psi2) {$\psi_0$};
            \node[obs, right=of psi1, minimum size=25pt] (psi3) {$\psi_2$};

            \node (g1) at ($(psi1)!0.5!(psi2)$) {};
            \node (g2) at ($(psi1)!0.5!(psi3)$) {};

            \node[obs, above=of g1, minimum size=25pt] (p1) {$\sigma^\pi_1$};
            \node[obs, above=of g2, minimum size=25pt] (p2) {$\sigma^\pi_2$};

            \edge {p1} {psi1, psi2};
            \edge {p2} {psi1, psi3};
        \end{tikzpicture}

        \caption{Factored Influence Network, $\mathcal{G}_\Sigma$.}
        \label{fig:policy_factorisation:factored}
    \end{subfigure}

    \caption{Influence network transformation under a $\Sigma$-factorisation
    with $\sigma_1^\pi\!\left(\bm{a}\right) \doteq (a_1, a_2)$ and
    $\sigma_2^\pi\!\left(\bm{a}\right) \doteq (a_3)$. Here, $\Sigma$
    corresponds to a minimum factorisation of the
    policy; i.e.\ $\Sigma = \Sigma^\star$.}\label{fig:policy_factorisation}
\end{figure}

Factored influence networks ascribe links between the policy
factors in \autoref{eq:policy_factorisation} and the targets $\psi_j \in \Psi$.
They play an important role in \autoref{sec:fpgs} and provide a refinement of
\autoref{def:influence_net} which allows us to design more efficient
algorithms. As an example, \autoref{fig:policy_factorisation} shows how one
possible policy factorisation transforms an influence network $\mathcal{G}$
into $\mathcal{G}_\Sigma$. Note that while the action nodes and edges have been
partitioned into policy factors, the fundamental topology with respect to the
attribution of influence remains unchanged; i.e.\ no dependencies are lost.

\section{Factored Policy Gradients}\label{sec:fpgs}
\emph{Factored policy gradients} exploit factored influence networks by
attributing each $\psi_j \in \Psi$ only to the policy factors that were
probabilistically responsible for generating it; that is, those with a connecting
edge in the given $\mathcal{G}_\Sigma$. The intuition is that the extraneous
targets in the objective do not contribute to learning, but do contribute
towards variance. For example, it would be counter-intuitive to include
$\psi_2$ of \autoref{fig:policy_factorisation:factored} in the update for
$\pi_1$ since it played no generative role. Naturally, by removing these terms
from the gradient estimator, we can improve the signal to noise ratio and yield
more stable algorithms. This idea can be formulated into a set of baselines
which are defined and validated below.
\begin{definition}[\textbf{Factor Baselines}]\label{def:factor_baselines}
    For a given $\mathcal{G}_\Sigma$, the \emph{factor baselines} (FBs) are defined as
    \begin{equation}
        b^\textsc{F}_i\!\left(s, \bar\sigma^\pi_i\!\left(\bm{a}\right)\right)
        \doteq \left[\left(\bm{1} -
        \bm{K}_\Sigma\right)\bm{\lambda}\circ\bm{\psi}\!\left(s,
        \bm{a}\right)\right]_i,\label{eq:baseline:factor}
    \end{equation}
    for all $i \in\left[\left\lvert\Sigma\right\rvert\right]$, where $\circ$ denotes the Hadamard product and $\bm{1}$ is to be taken as an all-ones matrix.
\end{definition}

\begin{lemma}\label{lem:factor_baselines}
    FBs are valid control variates if $\mathcal{G}_\Sigma$ is true to the MDP (i.e.\ unbiased).

\end{lemma}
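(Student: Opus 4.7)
The plan is to show that, for each policy factor $i \in [\lvert\Sigma\rvert]$, one has
$$
\EE{\pi_{\bth},\rho_{\pi_{\bth}}}{b^\textsc{F}_i\!\left(s, \bar\sigma^\pi_i\!\left(\bm{a}\right)\right) \gradthi_i \ln \cpt_{i,\bth}(\sigma^\pi_i\!\left(\bm{a}\right)|s)} = 0,
$$
which is the usual condition for a (factor-wise) control variate and which, summed over factors, yields the unbiasedness claim. First I would unpack the formula: the $i$-th row of $\bm{1} - \bm{K}_\Sigma$ has a $1$ in column $j$ precisely when factor $i$ has no edge to target $j$ in $\mathcal{G}_\Sigma$, so the definition rewrites as $b^\textsc{F}_i = \sum_{j \,:\, (i,j)\notin E_\Sigma} \lambda_j \psi_j\!\left(s,\sigma_j\!\left(\bm{a}\right)\right)$, a sum of exactly those targets that are declared independent of the $i$-th factor by the (factored) influence network.

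The crucial step is to argue that the hypothesis ``$\mathcal{G}_\Sigma$ is true to the MDP'' (i.e.\ no dependencies are missing in the structural-equation reading of \autoref{def:influence_net}) implies that for every $(i,j)\notin E_\Sigma$ the target $\psi_j\!\left(s,\sigma_j(\bm{a})\right)$ is a function of $\bm{a}$ only through $\bar\sigma^\pi_i(\bm{a})$; hence $b^\textsc{F}_i$ is measurable with respect to $\left(s,\bar\sigma^\pi_i(\bm{a})\right)$, as its argument suggests. I would make this concrete by noting that $\sigma_j$ selects only components indexed in the $j$-th column of $\bm{K}$, and after the merger induced by $\Sigma$ those components all lie in $\bar\sigma^\pi_i(\bm{a})$ whenever $(i,j)\notin E_\Sigma$.

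Given this, the remainder is a standard score-function calculation. Because $\Sigma$ is a policy factorisation (\autoref{def:policy_factorisation}), \autoref{eq:policy_factorisation} gives $\sigma^\pi_i(\bm{a}) \indep \bar\sigma^\pi_i(\bm{a}) \mid s$ under $\pi_{\bth}$, and the score decomposes as $\bm{z} = \bigoplus_i \gradthi_i \ln \cpt_{i,\bth}(\sigma^\pi_i(\bm{a})|s)$. Conditioning on $\left(s,\bar\sigma^\pi_i(\bm{a})\right)$ and applying the tower property,
$$
\EE{}{b^\textsc{F}_i\,\gradthi_i \ln \cpt_{i,\bth}(\sigma^\pi_i(\bm{a})|s)}
= \EE{}{b^\textsc{F}_i\,\CEE{\pi_{i,\bth}}{\gradthi_i \ln \cpt_{i,\bth}(\sigma^\pi_i(\bm{a})|s)}{s}},
$$
and the inner expectation vanishes by the usual log-derivative identity $\int \gradthi_i \cpt_{i,\bth}(\cdot|s)\, \mathrm{d}\sigma^\pi_i = \gradthi_i 1 = 0$, which is licit under the continuous differentiability assumption on $\pi_{\bth}$. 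Summing the vanishing contributions over $i$ shows that replacing $\psi$ by $\psi - \sum_i b^\textsc{F}_i$ in the VPG estimator (\autoref{def:vpg}) preserves $\gradbth J\!\left(\bth\right)$.

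The main obstacle is the middle step: turning the informal ``true to the MDP'' assumption into the operational statement that $\psi_j$ depends on $\bm{a}$ only through the components allowed by $\bm{K}_\Sigma$, and checking that the merge of vertices in \autoref{def:fin} preserves this property so the baseline really is $\bar\sigma^\pi_i$-measurable. Everything after that is mechanical.
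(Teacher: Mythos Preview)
Your proposal is correct and follows essentially the same route as the paper: the paper packages the tower-property/log-derivative step into a separate control-variate lemma (\autoref{lem:control_variate}) and then simply asserts that $b^\textsc{F}_i$ depends only on $(s,\bar\sigma^\pi_i(\bm{a}))$ before invoking it, whereas you unpack the $(\bm{1}-\bm{K}_\Sigma)$ formula to justify that measurability explicitly and perform the score-function calculation inline. The arguments are the same in substance; if anything, your treatment of the ``true to the MDP'' hypothesis is more careful than the paper's one-line appeal to mutual independence.
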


\emph{Factor baselines} are related to the action-dependent baselines studied
by \citet{wu:2018:variance} and \citet{tucker:2018:mirage}, as well as the
methods employed by COMA~\cite{foerster:2018:counterfactual} and
DRPGs~\cite{castellini:2020:difference} in multi-agent systems. Note, however, that FBs are distinct
in two key ways:
\begin{enumerate*}
    \item they adhere to the structure of the influence network and account not
        only for policy factorisation, but also the target multiplicity of
        MOMDPs; and
    \item unlike past work, factor baselines were defined using an ansatz based on the
        structure implied by a given $\mathcal{G}_\Sigma$ as opposed to
        explicitly deriving the $\argmin$ of the variance, or approximation thereof; see
        the appendix.
\end{enumerate*}
This means that, unlike optimal baselines, FBs can be computed efficiently and
thus yield practical algorithms. Indeed, this very fact is why the state-value
function is used so ubiquitously in traditional actor-critic methods as a
state-dependent control variate despite being sub-optimal. It follows that we
can define an analogous family of methods for MOMDPs with zero computational overhead.

\begin{proposition}[\textbf{FPGs}]\label{prop:fpg}
    Take a $\Sigma$-factored policy $\pt_\bth(\bm{a}|s)$ and
    $\left\lvert\bm{\theta}\right\rvert \times \left\lvert\Sigma\right\rvert$
    matrix of scores $\bm{S}\!\left(s, \bm{a}\right)$. Then, for target vector
    $\bm{\psi}\!\left(s, \bm{a}\right)$ and multipliers $\bm{\lambda}$, the
    \emph{FPG} estimator
    \begin{equation}\label{eq:fpg}
        \bm{g}^\textsc{F}\!\left(s, \bm{a}\right) \doteq \bm{S}\!\left(s,
        \bm{a}\right)\bm{K}_\Sigma\,\bm{\lambda}\circ\bm{\psi}\!\left(s,
        \bm{a}\right),
    \end{equation}
    is an unbiased estimator of the true policy gradient; i.e.\ $\gradbth J\!\left(\bth\right) = \EE{\pi_{\bth},
    \rho_{\pi_{\bth}}}{\bm{g}^\textsc{F}\!\left(s, \bm{a}\right)}$.
\end{proposition}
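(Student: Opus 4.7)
The plan is to exhibit the FPG estimator as the vanilla policy gradient (expanded in factored form) minus a collection of score-weighted factor baselines whose joint expectation vanishes by \autoref{lem:factor_baselines}.

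First I would unpack the score matrix. Since the policy factorises according to \autoref{eq:policy_factorisation}, the total score decomposes as $\bm{z} = \sum_{i=1}^{|\Sigma|} \bm{z}_i$, where $\bm{z}_i \doteq \gradbth \ln \pi_{i,\bth}(\sigma_i^\pi(\bm{a}) \mid s)$ is identified with the $i$-th column of $\bm{S}(s,\bm{a})$. Applying the standard policy gradient theorem to the scalarised objective of \autoref{eq:linear_psi} then gives
\begin{equation*}
    \gradbth J\!\left(\bth\right) = \EE{\pi_\bth,\rho_{\pi_\bth}}{\sum_{i=1}^{|\Sigma|} \bm{z}_i \sum_{j=1}^m \lambda_j\,\psi_j\!\left(s,\sigma_j\!\left(\bm{a}\right)\right)}.
\end{equation*}

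Next, for each policy factor $i$ I would split the inner target sum using the $i$-th row of $\bm{K}_\Sigma$ and its complement in $\bm{1}$:
\begin{equation*}
    \sum_{j=1}^m \lambda_j\psi_j = \bigl[\bm{K}_\Sigma\,\bm{\lambda}\circ\bm{\psi}\bigr]_i + \bigl[\left(\bm{1} - \bm{K}_\Sigma\right)\,\bm{\lambda}\circ\bm{\psi}\bigr]_i.
\end{equation*}
Summed against $\bm{z}_i$ over $i$, this decomposes the integrand as $\bm{g}^\textsc{F}\!\left(s,\bm{a}\right) + \sum_i \bm{z}_i\,b^\textsc{F}_i\!\left(s,\bar\sigma^\pi_i\!\left(\bm{a}\right)\right)$: the first piece is precisely \autoref{eq:fpg} by the definition of matrix--vector multiplication, and the second is a linear combination of per-factor scores paired with the factor baselines of \autoref{def:factor_baselines}.

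Taking expectations, the factor-baseline sum vanishes by \autoref{lem:factor_baselines}, leaving $\EE{\pi_\bth,\rho_{\pi_\bth}}{\bm{g}^\textsc{F}\!\left(s,\bm{a}\right)} = \gradbth J\!\left(\bth\right)$ as required. The crux --- essentially the only non-mechanical step --- is this appeal to \autoref{lem:factor_baselines}, whose validity requires $\mathcal{G}_\Sigma$ to be true to the MDP: whenever $K_{\Sigma,ij}=0$, the target $\psi_j$ must have no functional dependence on $\sigma_i^\pi\!\left(\bm{a}\right)$, so that the $\sigma_i^\pi$-marginal of $\psi_j\bm{z}_i$ collapses via the score-function identity $\gradbth\!\int\pi_{i,\bth}\!\left(\sigma_i^\pi\mid s\right)\,\mathrm{d}\sigma_i^\pi = 0$. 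Without this faithfulness hypothesis the discarded cross-terms would in general have nonzero mean and FPGs would be biased --- a caveat inherited directly from \autoref{lem:factor_baselines}.
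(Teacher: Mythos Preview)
Your proposal is correct and follows essentially the same route as the paper: write the vanilla policy gradient in factored form, split the target vector into the $\bm{K}_\Sigma$ and $(\bm{1}-\bm{K}_\Sigma)$ pieces, and invoke \autoref{lem:factor_baselines} to kill the latter. The paper carries this out in compact matrix notation (writing the vanilla estimator as $\bm{S}\,\bm{J}_{n,m}\,\bm{\lambda}\circ\bm{\psi}$ and subtracting the factor correction $\bm{S}(\bm{1}-\bm{K}_\Sigma)\bm{\lambda}\circ\bm{\psi}$) rather than your per-factor summation, but the content is identical.
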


\autoref{prop:fpg} above shows that the VPG estimator given in
\autoref{def:vpg} can be expressed in our calculus as
$\bm{S}\bm{1}\bm{\lambda}\circ\bm{\psi}$, where $\bm{1}$ is an all-ones matrix
and, traditionally, $\psi \doteq q_\pi$; note that one can still include other
baselines in \autoref{eq:fpg} such as $v_\pi$. In other words,
\emph{\autoref{prop:fpg} strictly generalises the policy gradient
theorem}~\cite{sutton:2000:policy} and, by virtue of it's unbiasedness, thus retains all convergence guarantees. We also see that both
COMA~\cite{foerster:2018:counterfactual} and
DRPGs~\cite{castellini:2020:difference} are special cases in which the
influence network reflects the separation of agents with
$\bm{K}_\Sigma$ a square, and often diagonal matrix.

\subsection{Variance Analysis}\label{sec:variance_analysis}
The variance reducing effect of FBs comprises two terms:
\begin{enumerate*}
    \item a quadratic and thus non-negative component which scales with the
        \emph{second moments} of $b_i^\textsc{F}$; and
    \item a linear term which scales with the \emph{expected values} of
        $b_i^\textsc{F}$.
\end{enumerate*}
This is shown in the following result.

\begin{proposition}[Variance Decomposition]\label{prop:variance_reduction}
    Let $\bm{g}_i$ denote a gradient estimate for the $i$\textsuperscript{th}
    factor of a $\Sigma$-factored policy $\pi_{\bth}$
    (\autoref{eq:policy_factorisation}). Then, $\Delta\mathbb{V}_i \doteq
    \V{\bm{g}_i^\textsc{V}} - \V{\bm{g}_i^\textsc{F}}$, satisfies
    \begin{equation}
        \Delta\mathbb{V}_i = \alpha_i
        \,\EE{\bar\sigma_i^\pi\!\left(\bm{a}\right)}{\left(b_i^\textsc{F}\right)^2}
        + 2\beta_i\EE{\bar\sigma_i^\pi\!\left(\bm{a}\right)}{b_i^\textsc{F}},
        \label{eq:var_diff}
    \end{equation}
    where $\bm{z}_i \doteq \gradbth\ln\cpt_{i,\bth}(\bm{a}|s)$, $\alpha_i
    \doteq \EE{\sigma_i^\pi\!\left(\bm{a}\right)}{\dot<\bm{z}_i, \bm{z}_i>}
    \geq 0$ and $\beta_i \doteq
    \EE{\sigma_i^\pi\!\left(\bm{a}\right)}{\dot<\bm{z}_i, \bm{z}_i>\left(\psi +
    b_i^\textsc{F}\right)}$.
\end{proposition}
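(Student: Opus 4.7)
The plan is to express $\bm{g}_i^\textsc{V}$ as $\bm{g}_i^\textsc{F}$ plus a zero-mean residual involving the factor baseline, and then exploit the independence induced by the $\Sigma$-factorisation to split the resulting variance gap into the quadratic and linear pieces claimed.

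First I would define the factor target $\tilde\psi_i \doteq \left[\bm{K}_\Sigma(\bm{\lambda}\circ\bm{\psi})\right]_i$ so that, by \autoref{def:factor_baselines}, the full scalarised objective decomposes as $\psi = \tilde\psi_i + b_i^\textsc{F}$. Since $\bm{g}_i^\textsc{V} = \bm{z}_i\psi$ and $\bm{g}_i^\textsc{F} = \bm{z}_i\tilde\psi_i$, this yields the clean algebraic relation $\bm{g}_i^\textsc{V} = \bm{g}_i^\textsc{F} + \bm{z}_i b_i^\textsc{F}$. \autoref{lem:factor_baselines} then gives $\E{\bm{z}_i b_i^\textsc{F}} = 0$, so the two estimators share a mean and, interpreting $\V{\cdot}$ as the trace of the covariance (the standard convention for vector-valued gradient estimators), the variance gap collapses to a difference of second moments,
\[
\Delta\V_i \;=\; \E{\norm{\bm{z}_i}^2\psi^2} - \E{\norm{\bm{z}_i}^2\tilde\psi_i^2} \;=\; \E{\norm{\bm{z}_i}^2\, b_i^\textsc{F}\,(\psi + \tilde\psi_i)},
\]
using the factorisation of the difference of squares together with $\psi - \tilde\psi_i = b_i^\textsc{F}$.

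Next I would invoke the independence $\sigma_i^\pi(\bm{a})\indep\bar\sigma_i^\pi(\bm{a})$ guaranteed by \autoref{def:policy_factorisation}, together with the measurability facts that $\bm{z}_i$ is $\sigma_i^\pi(\bm{a})$\nobreakdash-measurable while $b_i^\textsc{F}$ is $\bar\sigma_i^\pi(\bm{a})$-measurable (the latter being exactly the consistency condition that powered the unbiasedness proof in \autoref{lem:factor_baselines}). Rewriting $\psi + \tilde\psi_i$ in the form $(\psi + b_i^\textsc{F}) + (\tilde\psi_i - b_i^\textsc{F})$ and distributing splits the integrand into (i) a term $\norm{\bm{z}_i}^2(b_i^\textsc{F})^2$ which factors under independence into $\alpha_i\,\EE{\bar\sigma_i^\pi(\bm{a})}{(b_i^\textsc{F})^2}$, and (ii) a cross term $\norm{\bm{z}_i}^2(\psi + b_i^\textsc{F})\,b_i^\textsc{F}$ to which the tower property applies: the inner expectation over $\sigma_i^\pi(\bm{a})$ is precisely $\beta_i$, and the outer expectation over $\bar\sigma_i^\pi(\bm{a})$ reduces to $\EE{\bar\sigma_i^\pi(\bm{a})}{b_i^\textsc{F}}$ once the $\sigma_i^\pi$-independent factors are pulled out, yielding the claimed $2\beta_i\EE{\bar\sigma_i^\pi(\bm{a})}{b_i^\textsc{F}}$ contribution.

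The main obstacle is step three: cleanly disentangling the cross term. The algebraic identity relating $\psi + \tilde\psi_i$ to $\psi + b_i^\textsc{F}$ has to be chosen so that, after distribution, the $(b_i^\textsc{F})^2$ residues line up with the quadratic piece and the remaining factor inside the expectation matches the definition of $\beta_i$ exactly; it is easy to pick up extra $\alpha_i\EE{\bar\sigma_i^\pi(\bm{a})}{(b_i^\textsc{F})^2}$ terms with the wrong sign if one decomposes via $\psi - b_i^\textsc{F}$ instead. The positivity claim on the quadratic piece is then immediate from $\alpha_i = \EE{\sigma_i^\pi(\bm{a})}{\langle\bm{z}_i,\bm{z}_i\rangle}\geq 0$, while the sign of the linear term is left indeterminate, agreeing with the qualitative discussion preceding the proposition.
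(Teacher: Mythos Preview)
Your overall strategy is the paper's: write $\bm{g}_i^\textsc{V}=\bm{g}_i^\textsc{F}+b_i^\textsc{F}\bm{z}_i$, observe the residual has zero mean by \autoref{lem:factor_baselines}, and then factor by the independence of $\sigma_i^\pi(\bm{a})$ and $\bar\sigma_i^\pi(\bm{a})$. The paper carries this out via the identity $\V{\bm{X}}-\V{\bm{Y}}=\V{\bm{X}-\bm{Y}}+2\,\textrm{Cov}\!\left[\bm{Y},\bm{X}-\bm{Y}\right]$, which immediately delivers the quadratic piece $\E{\dot<\bm{z}_i,\bm{z}_i>(b_i^\textsc{F})^2}$ and the cross piece $2\,\E{\dot<\bm{z}_i,\bm{z}_i>(\psi-b_i^\textsc{F})\,b_i^\textsc{F}}$, after which independence gives the stated form.

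There is a genuine gap in your third paragraph. The rewriting $\psi+\tilde\psi_i=(\psi+b_i^\textsc{F})+(\tilde\psi_i-b_i^\textsc{F})$ is algebraically correct, but distributing $b_i^\textsc{F}$ across it does \emph{not} yield the two pieces you list: your (i) and (ii) sum to $2(b_i^\textsc{F})^2+\psi\,b_i^\textsc{F}$, whereas the actual integrand is $b_i^\textsc{F}(\psi+\tilde\psi_i)=2\psi\,b_i^\textsc{F}-(b_i^\textsc{F})^2$. More seriously, your factorisation of (ii) cannot work: $\psi+b_i^\textsc{F}$ is not $\sigma_i^\pi(\bm{a})$-measurable (it depends on $\bar\sigma_i^\pi(\bm{a})$ through $b_i^\textsc{F}$ and, generically, through $\psi$), so the ``inner expectation over $\sigma_i^\pi(\bm{a})$'' you call $\beta_i$ is still a random function of $\bar\sigma_i^\pi(\bm{a})$ and cannot be pulled outside the outer expectation.

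The decomposition that does the job is precisely the one you warned against: $\psi+\tilde\psi_i=b_i^\textsc{F}+2(\psi-b_i^\textsc{F})$, giving $(b_i^\textsc{F})^2+2(\psi-b_i^\textsc{F})b_i^\textsc{F}$ inside the expectation and hence $\beta_i=\EE{\sigma_i^\pi(\bm{a})}{\dot<\bm{z}_i,\bm{z}_i>(\psi-b_i^\textsc{F})}$. This is exactly what the paper's variance--covariance identity produces in one line. The $\psi+b_i^\textsc{F}$ in the proposition as stated is a misprint; the paper's own appendix proof derives $\psi-b_i^\textsc{F}$ (i.e.\ $\tilde\psi_i$), and your attempt to force the printed sign is what caused the algebra to break.
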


The first of these two terms is a ``free lunch'' which removes the targets
that are not probabilistically related to each factor. The linear term, on the other
hand, couples the adjusted target with the entries that were removed by the
baseline. This suggests that asymmetry and covariance can have a regularising
effect in VPGs that is not present in FPGs --- a manifestation of the
properties of control variates~\cite{mohamed:2020:monte}.
Now, if we do not assume that the target functions are bounded, then the
linear term in \autoref{eq:var_diff} can grow arbitrarily in either
direction, but we typically require that rewards are restricted to some compact
subset $\mathcal{R} \subset \Re$ to avoid this. Below, we show that if a
similar requirement holds for each target function --- namely, that
$\inf_{\mathcal{S}, \mathcal{A}} \psi_j$ is well defined for each
$\psi_j\in\Psi$ --- then we can always construct a set of mappings that
constrain~\eqref{eq:var_diff} to be non-negative without biasing the gradient.

\begin{corollary}[Non-Negative Variance Reduction]\label{corr:non_negative}
    Let $\psi\!\left(s, \bm{a}\right)$ be of the form in
    \autoref{eq:linear_psi}.  If $\psi_j\!\left(s, \bm{a}\right) \geq
    \underline{\psi_j}$ for all $(s, \bm{a}) \in \mathcal{S}\times\mathcal{A}$
    and $j\in[m]$, with $\lvert\underline{\psi_j}\rvert < \infty$,
    then there exists a linear translation, $\psi_i \to \psi_i - \sum_{j=1}^m
    \lambda_j \underline{\psi_j}$, which leaves the gradient unbiased but
    yields $\Delta\mathbb{V}_i \geq 0$.\footnote{This inequality can be made strict if either $\alpha_i>0$ or $\beta_i>0$ --- where the former equates to having a non-zero trace of the Fisher information matrix --- and a small $\varepsilon > 0$ is added to the translation.}
\end{corollary}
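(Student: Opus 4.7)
My approach exploits the fact that the FPG estimator of \autoref{prop:fpg} is invariant under translating any individual target $\psi_j$ by a constant, because the expected score of every policy factor vanishes. This gives $m$ free degrees of freedom with which to sign-control the two terms of the variance decomposition in \autoref{prop:variance_reduction}. The plan is to shift each $\psi_j$ by its infimum, $\tilde\psi_j \doteq \psi_j - \underline{\psi_j} \geq 0$, which by linearity induces the stated translation of the scalarised target by $\sum_{j} \lambda_j \underline{\psi_j}$ and simultaneously forces both the factor baseline and the full objective to be pointwise non-negative; from there, the two terms on the right-hand side of~\autoref{eq:var_diff} are manifestly non-negative.

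The first step is to verify translation-invariance. Writing the $k$-th component of~\autoref{eq:fpg} as $g_k^\textsc{F}(s,\bm{a}) = \sum_i S_{ki}(s,\bm{a}) \sum_j K_{\Sigma,ij} \lambda_j \psi_j$, the proposed shift introduces an additive correction $-\sum_i c_i S_{ki}$, where $c_i \doteq \sum_j K_{\Sigma,ij} \lambda_j \underline{\psi_j}$ is a deterministic constant depending only on the factor index. Since $S_{\cdot,i}$ is the score of the $i$-th policy factor and depends only on $\sigma_i^\pi(\bm{a})$, the log-derivative identity gives $\EE{\sigma_i^\pi(\bm{a})}{S_{ki}} = 0$, and combined with the factorwise independence in~\autoref{eq:policy_factorisation} each correction has zero expectation. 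The translated estimator therefore remains unbiased; this is essentially the same mechanism that underlies \autoref{prop:fpg}.

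The second step is to check signs in~\autoref{eq:var_diff}. Assuming $\lambda_j \geq 0$ (the standard convention for scalarised MOMDPs), the translated targets $\tilde\psi_j$ are non-negative, so by \autoref{def:factor_baselines} the translated factor baseline $b_i^\textsc{F}$ is a non-negative linear combination of non-negative quantities and hence pointwise non-negative. This immediately gives $\EE{\bar\sigma_i^\pi(\bm{a})}{b_i^\textsc{F}} \geq 0$ and trivially $\EE{\bar\sigma_i^\pi(\bm{a})}{(b_i^\textsc{F})^2} \geq 0$. The same pointwise non-negativity carries over to $\tilde\psi + b_i^\textsc{F}$, and since $\langle\bm{z}_i,\bm{z}_i\rangle \geq 0$ almost surely, the integrand defining $\beta_i$ is almost-everywhere non-negative, giving $\beta_i \geq 0$. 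Together with $\alpha_i \geq 0$ from \autoref{prop:variance_reduction}, equation~\autoref{eq:var_diff} yields $\Delta\mathbb{V}_i \geq 0$. The footnoted strict version follows by shifting instead by $\underline{\psi_j} + \varepsilon$: this makes $b_i^\textsc{F}$ strictly positive, so both $\EE{\bar\sigma_i^\pi(\bm{a})}{b_i^\textsc{F}}$ and $\EE{\bar\sigma_i^\pi(\bm{a})}{(b_i^\textsc{F})^2}$ are strictly positive, whereupon either $\alpha_i>0$ or $\beta_i>0$ makes at least one summand strictly positive.

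The only subtlety I anticipate is the implicit sign assumption on $\bm{\lambda}$: if some $\lambda_j < 0$, then translation by an infimum may not produce a single-signed baseline, and the natural fix is to translate those $\psi_j$ by their suprema instead, which would require an additional upper-boundedness assumption not stated in the corollary. Modulo this caveat, the remainder is straightforward book-keeping on~\autoref{eq:var_diff}, so I do not expect any substantive obstacle.
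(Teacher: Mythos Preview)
Your argument is correct and follows the same mechanism as the paper's proof: use the vanishing of expected scores to show any constant shift of the targets leaves the estimator unbiased, then choose the shift so that every ingredient of \autoref{eq:var_diff} becomes pointwise non-negative. The paper's appendix proof is essentially a two-line version of what you wrote.

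One small point of divergence worth noting: the corollary as stated translates every $\psi_i$ by the \emph{same} constant $\sum_j\lambda_j\underline{\psi_j}$, whereas you translate each $\psi_j$ by its \emph{own} infimum $\underline{\psi_j}$ and observe that this induces the stated shift on the scalarised target. Your per-target translation is the cleaner instantiation---it directly forces $\tilde\psi_j\ge 0$ for each $j$, which is exactly what the sign argument on $b_i^\textsc{F}$ and $\beta_i$ needs. You are also right to flag the implicit assumption $\lambda_j\ge 0$: without it, $(1-K_{\Sigma,ij})\lambda_j\tilde\psi_j$ need not be non-negative and the argument for $b_i^\textsc{F}\ge 0$ and $\beta_i\ge 0$ breaks down; the paper leaves this tacit.
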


\begin{figure*}[t]
    \centering
    \begin{subfigure}[b]{0.48\linewidth}
        \centering
        \includegraphics[width=\linewidth]{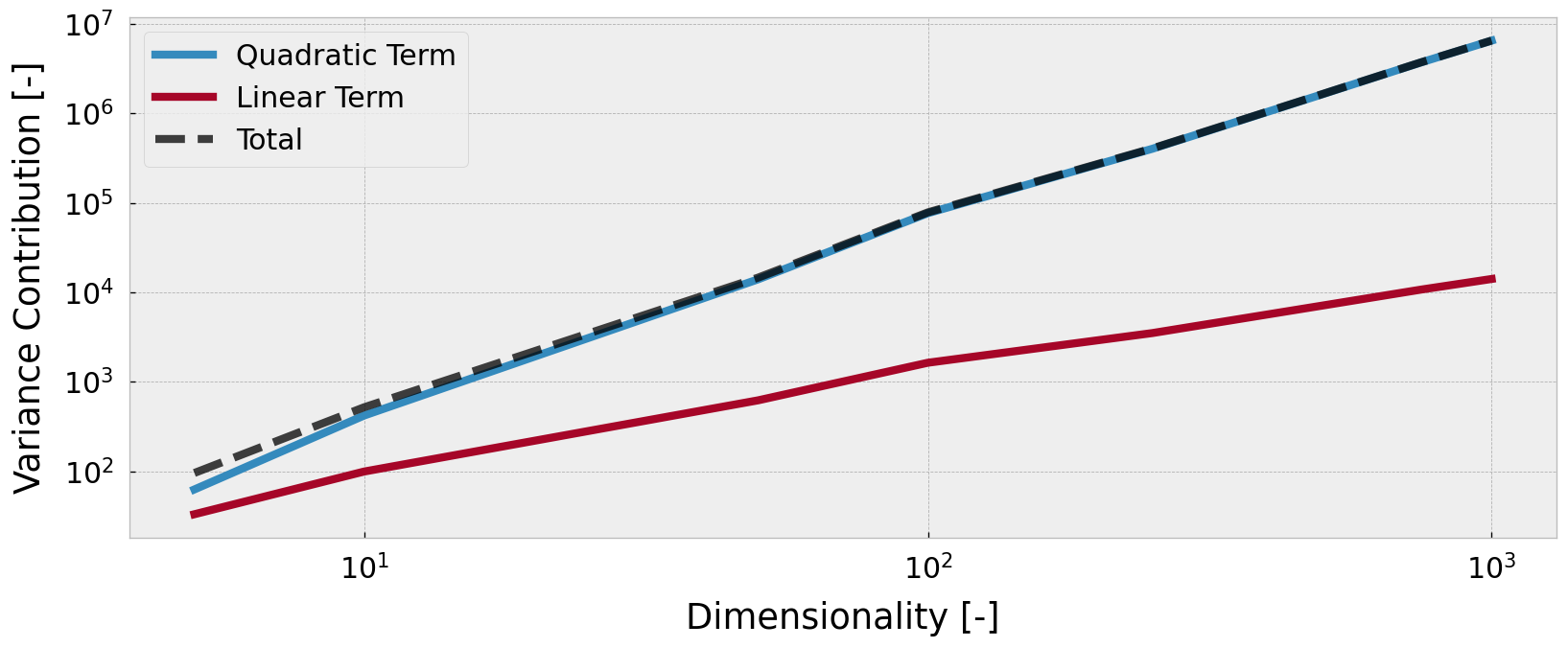}

        \caption{\textbf{Search bandit}: $\textrm{Cost}\!\left(\bm{a}\right) \doteq -\norm{\bm{a} -
        \bm{c}}_1$ with fixed centroid vector $\bm{c} \in \Re^n$.}
    \end{subfigure}
    \hfill
    \begin{subfigure}[b]{0.48\linewidth}
        \centering
        \includegraphics[width=\linewidth]{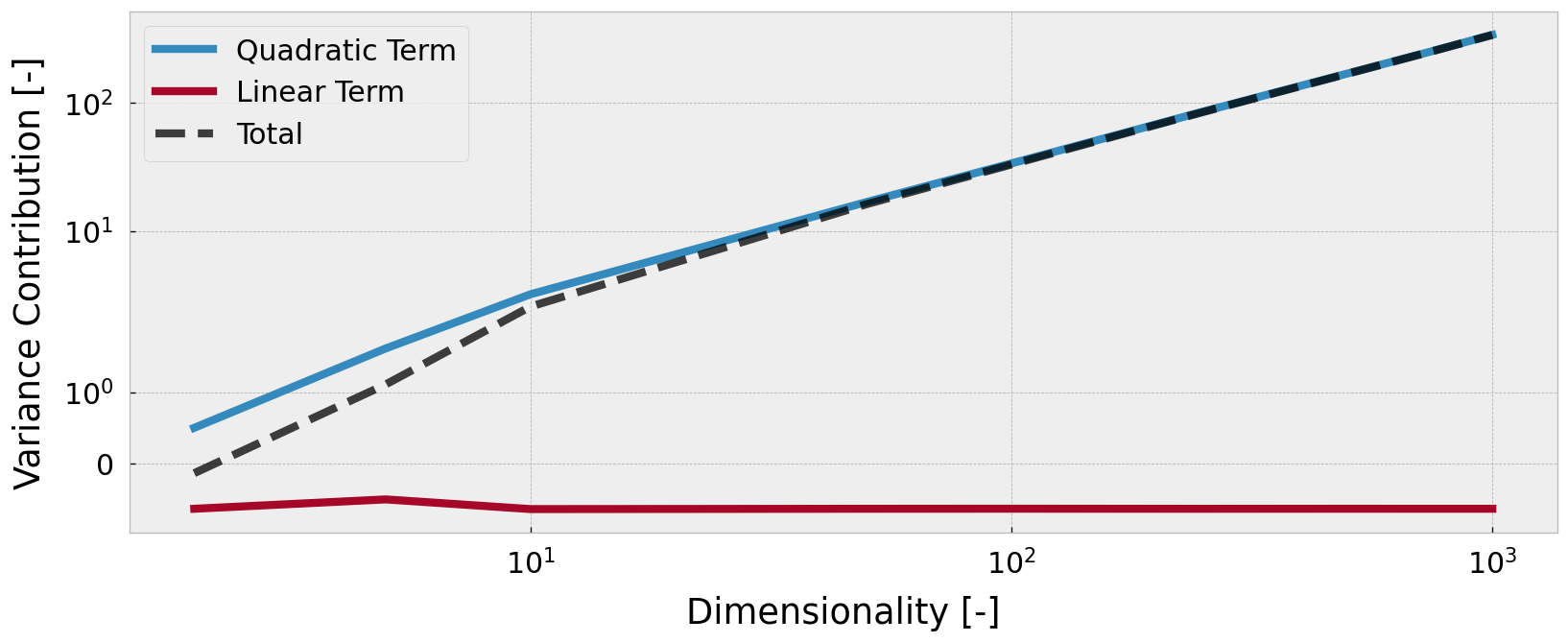}

        \caption{\textbf{ReLU bandit}: $\textrm{Cost}\!\left(\bm{a}\right) \doteq -\sum_{i=1}^n
        \max{\left(e_i a_i, 0\right)}$ with fixed sign vector $\bm{e} \in \{-1, 1\}^n$.}
    \end{subfigure}

    \caption{Variance decomposition on a symmetric log scale for two bandit
    problems as a function of action-space dimensionality. Each term was
    computed using Monte-Carlo estimation with $10^5$ samples and taking the
    arithmetic mean across all policy factors.}\label{fig:variance_decomp}
\end{figure*}

Interestingly, numerical experiments on a pair of continuum armed bandits
suggest that this transformation is seldom necessary; see
\autoref{fig:variance_decomp}. As the number of policy factors and targets
grow, so too does the potential discrepancy in magnitude between the quadratic
and linear terms in \autoref{eq:var_diff}. The former starts to
dominate even for small $\left\lvert\Sigma\right\rvert$. This is particularly
prevalent when the influence matrix $\bm{K}_\Sigma$ is very sparse and the
baselines have wide coverage over $\Psi$. In other words, applying FBs when the influence network is very dense or even complete will not
yield tangible benefits (e.g.\ in Atari games), but applying them to a problem with a rich structure, such as traffic networks, will almost certainly yield a significant net reduction in variance.

\paragraph{Bias-Variance Trade-Off.}
It is important to note that, in real-world problems, one does not always know
the exact structure of the influence network underling an MDP ex ante. This
poses a challenge since incorrectly \emph{removing} edges can introduce bias
and thus constrain the space of solutions that can be found by FPGs. Note,
however, that this may not always be a problem, since a small amount bias for a
large reduction in variance can be desirable. Furthermore, one could leverage
curriculum learning to train the policy on (presumed) influence networks with
increasing connectedness over time. This trade-off between bias and variance is
present in many machine learning settings, and depends strongly on the problem
at hand; we explore this empirically in \autoref{sec:numerical:traffic}.

\subsection{Minimum Factorisation}\label{sec:minimum_factorisation}
For many classes of fully-observable MDPs, any policy factorisation is
theoretically viable: we can fully factor the policy such that each action
dimension is independent of all others; or, at the other extreme, treat the
policy as a full joint distribution over $\mathcal{A}$. This holds because, in many
classes of (fully-observable) MDPs, there exists at least one deterministic
optimal policy~\cite{wiering:2012:reinforcement,puterman:2014:markov}. The
covariance acts as a driver of exploration, and it's initial value only affects
the rate of convergence.\footnote{Note that this is not true in general: the policy's covariance structure impacts the set of reachable solutions in
partially-observable MDPs and stochastic games, for example.} As a result, most
research uses an isotropic Gaussian with diagonal covariance to avoid the cost
of matrix inversion. This poses an interesting question: is there an
``optimal'' policy factorisation, $\Sigma_\mathcal{G}^\star$, associated with
an influence network $\mathcal{G}$? Below we offer a possible characterisation.

\begin{definition}[Minimum Factorisation]\label{def:minimum_factorisation}
    A minimum factorisation (MF), $\Sigma_\mathcal{G}^\star$, of an influence
    network, $\mathcal{G}$, is the \emph{minimum biclique vertex
    cover, disjoint amongst $I_\mathcal{A}$}.
\end{definition}

It follows from \autoref{def:minimum_factorisation} that for any
$\Sigma_\mathcal{G}^\star$, each $\sigma_i \in \Sigma_\mathcal{G}^\star$ is a
biclique (i.e.\ complete bipartite subgraph) of the original influence network
$\mathcal{G}$, and that the bipartite dimension is equal to the number of
policy factors. For example, one can trivially verify that
\autoref{fig:policy_factorisation:factored} is an MF of the original graph; see
also the reductions in \autoref{fig:graph:prototypes}. In essence, an MF describes a complete partitioning over action vertices --- so as to define a proper distribution --- where each group is a biclique with the same set of outgoing edges. The ``minimum'' qualifier then ensures that the maximum number of nodes are included in each of these groups, a property which allows us to prove the following result:

\begin{theorem}\label{thm:mfs}
    The MF $\Sigma_\mathcal{G}^\star$ always exists and is unique.
\end{theorem}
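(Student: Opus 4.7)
The plan is to identify $\Sigma_\mathcal{G}^\star$ explicitly with the partition of $I_\mathcal{A}$ induced by the \emph{neighbourhood equivalence relation}, and then derive both existence and uniqueness as corollaries. Concretely, let $N_\mathcal{G}(a) \subseteq I_\Psi$ denote the set of targets adjacent to $a \in I_\mathcal{A}$ in $\mathcal{G}$, and define $a \sim a'$ iff $N_\mathcal{G}(a) = N_\mathcal{G}(a')$. Since $\sim$ is clearly reflexive, symmetric and transitive on a finite set, its quotient $\Sigma^\star \doteq I_\mathcal{A}/\!\sim$ is a well-defined partition, and so exists as a candidate factorisation.

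Next I would verify that $\Sigma^\star$ is in fact an admissible disjoint biclique cover. For each equivalence class $A \in \Sigma^\star$, pick any $a \in A$ and set $T_A \doteq N_\mathcal{G}(a)$; by construction every $a' \in A$ satisfies $N_\mathcal{G}(a') = T_A$, so $(A, T_A)$ is a complete bipartite subgraph of $\mathcal{G}$, and the merged edges of $\mathcal{G}_{\Sigma^\star}$ coincide exactly with those of $\mathcal{G}$ --- no dependency is lost. Disjointness across $I_\mathcal{A}$ is automatic since equivalence classes partition the set.

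The core step is then a refinement lemma: any other admissible disjoint biclique cover $\Sigma$ refines $\Sigma^\star$. Suppose $(A, T)$ is a biclique in such a cover. The complete-bipartite condition forces $T \subseteq N_\mathcal{G}(a)$ for every $a \in A$, while the requirement that $\mathcal{G}_\Sigma$ preserve all outgoing edges of $\mathcal{G}$ (the ``same set of outgoing edges'' clause in the discussion following \autoref{def:minimum_factorisation}) forces $N_\mathcal{G}(a) \subseteq T$. Together these yield $N_\mathcal{G}(a) = T$ for all $a \in A$, so $A$ is contained in a single $\sim$-class. Hence $\Sigma$ refines $\Sigma^\star$, which gives $\lvert\Sigma\rvert \geq \lvert\Sigma^\star\rvert$ with equality iff the refinement is trivial, i.e.\ $\Sigma = \Sigma^\star$. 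This simultaneously establishes the minimality of $\Sigma^\star$, its existence, and its uniqueness.

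The main obstacle is pinning down the precise faithfulness constraint on admissible factorisations --- namely that $\mathcal{G}_\Sigma$ must preserve every edge of $\mathcal{G}$ --- since without it one could vacuously collapse $I_\mathcal{A}$ into a single group with $T = \varnothing$ and claim a trivial ``biclique cover'' of size one, defeating the definition entirely. Once this dependency-preservation condition is made explicit (as implied by \autoref{def:fin} and the ``no dependencies are lost'' remark following \autoref{fig:policy_factorisation}), the remainder of the argument is a clean exercise in the refinement lattice of partitions of a finite set.
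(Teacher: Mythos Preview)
Your proof is correct and takes a genuinely different route from the paper's. The paper argues by contradiction: assuming two distinct minimum factorisations $A$ and $B$, it picks a biclique $a \in A$ not present in $B$, observes that the action vertices of $a$ must be split across at least two bicliques of $B$, and then asserts that the union of those $B$-bicliques is again a valid biclique, yielding a strictly smaller cover and contradicting minimality. Your approach is constructive: you identify $\Sigma_\mathcal{G}^\star$ explicitly as the quotient $I_\mathcal{A}/{\sim}$ under the neighbourhood equivalence $a\sim a' \iff N_\mathcal{G}(a)=N_\mathcal{G}(a')$, and prove a refinement lemma showing that every admissible disjoint biclique cover refines this partition, so that $\lvert\Sigma\rvert \geq \lvert\Sigma^\star\rvert$ with equality only when $\Sigma=\Sigma^\star$.

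What each buys: your argument delivers an explicit, polynomial-time description of the MF (the ``false twins'' partition on $I_\mathcal{A}$), makes precise which structural hypothesis is doing the work, and collapses existence, minimality and uniqueness into a single refinement inequality. The paper's contradiction argument is shorter but leans on the same neighbourhood-equivalence fact without naming it --- its key step, that the union of the relevant $B$-bicliques is again a biclique, is exactly the statement that all the action vertices involved share a common neighbourhood, which is what your $\sim$ encodes directly. Your observation about the edge-preservation constraint is also well placed: the paper's merge step tacitly requires it for the same reason you do, since otherwise the trivial one-block partition would already be ``minimum''.
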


Minimum factorisation is a natural construction for the problem domains
studied in this paper; see \autoref{sec:numerical}. It also yields factored
policies which, generally, expose the minimum infimum bound on variance for a
given influence network. This follows from the fact that an MF yields the
greatest freedom to express covariance structure within each of the policy
factors whilst also maximising the quadratic term in \autoref{eq:var_diff}. In
fact, when each action corresponds to a single unique target, the MF enjoys a
lower bound on variance that is linear in the number of factors.  Finally, we
remark that, whilst closely related to vertex covering problems (which are
known to be \textsf{NP}-complete~\cite{karp:1972:reducibility}), we observed
experimentally that finding the MF can be done trivially in polynomial
time; see e.g.~\cite{fleischner:2009:covering}.

\section{Numerical Experiments}\label{sec:numerical}
\subsection{Search Bandits}\label{sec:numerical:bandit}
Consider an $n\doteq 1000$ dimensional continuum armed bandit with action space in $\Re^n$, and cost
function: $\text{Cost}\!\left(\bm{a}\right) \doteq \norm{\bm{a} - \bm{c}}_1 +
\lambda\,\zeta\!\left(\bm{a}\right)$, where $\bm{c} \in \Re^n$, $\lambda \geq
0$ and $\zeta : \mathcal{A} \to \Re_+$ is a penalty function.  This describes a
search problem in which the agent must locate the centroid $\bm{c}$ subject to
an action-regularisation penalty. It abstracts away the prediction aspects of
MDP settings, and allows us to focus only on scalability; note that this
problem is closely related to the bandit studied by \citet{wu:2018:variance}
for the same purpose. In our experiments, the centroids were initialised with a
uniform distribution, $\bm{c} \sim \mathcal{U}\left(-5, 5\right)$ and were held
fixed between episodes. The policy was defined as an isotropic Gaussian with
fixed covariance, $\textrm{diag}\!\left(\bm{1}\right)$, and initial location
vector $\bm{\mu} \doteq \bm{0}$. The influence network was specified such that
each policy factor, $\pi_{i, \bth}$ for $i\in[n]$, used a reward target
$\psi_i\!\left(\bm{a}\right) \doteq -\Delta_i\!\left(a_i\right) -
\lambda\zeta\!\left(a_i\right)$, with $\Delta_i\!\left(a\right) \doteq
\left\lvert a - c_i\right\rvert$, amounting to a collection of $n$ forks
(\autoref{fig:graph:prototypes:fork}). The parameter vector, $\bm{\mu}$, was
updated at each time step, and the hyperparameters are provided in the appendix.

\begin{figure*}
    \centering
    \begin{subfigure}[t]{0.48\linewidth}
        \centering
        \includegraphics[width=\linewidth]{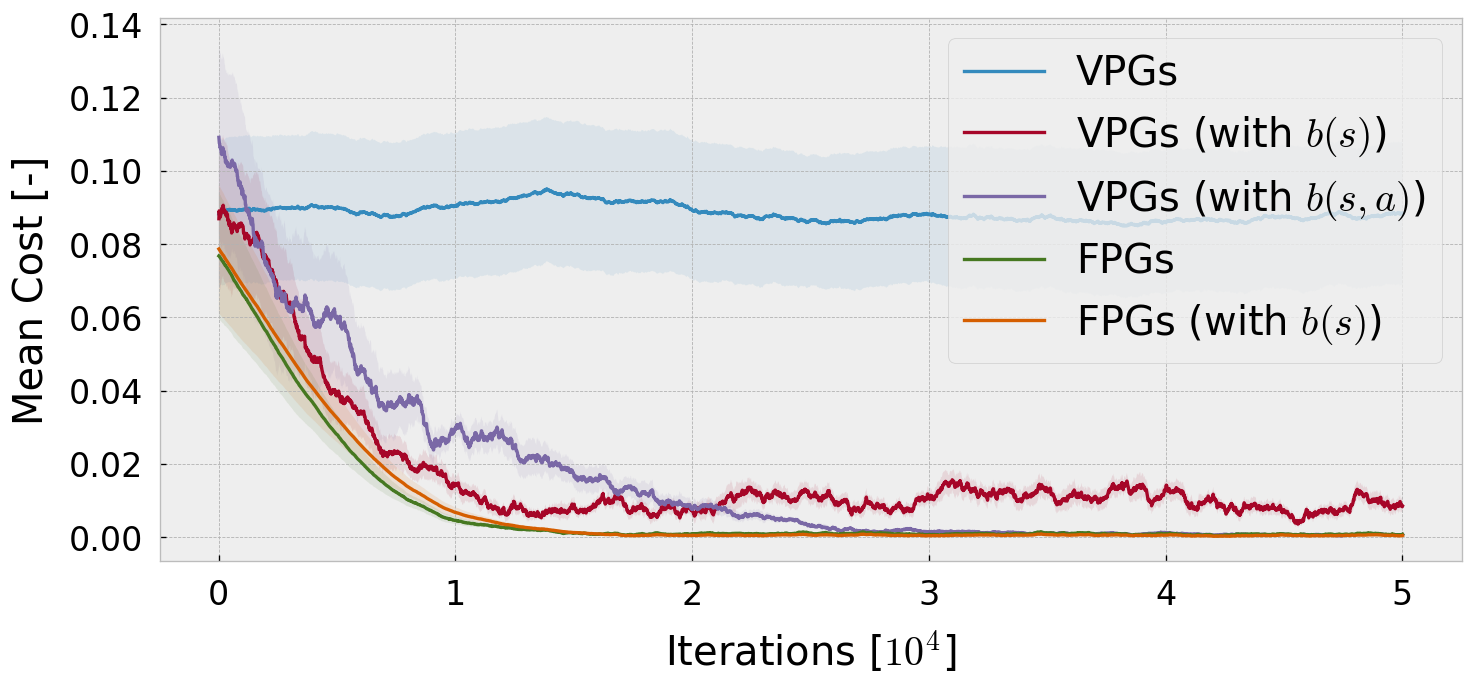}

        \caption{Convergence of FPGs (with and w/o additional state-dependent baseline) compared with VPGs using different baselines. The error bands denote the standard error on the mean over 10 random seeds.}\label{fig:search:baselines}
    \end{subfigure}
    \hfill
    \begin{subfigure}[t]{0.48\linewidth}
        \centering
        \includegraphics[width=\linewidth]{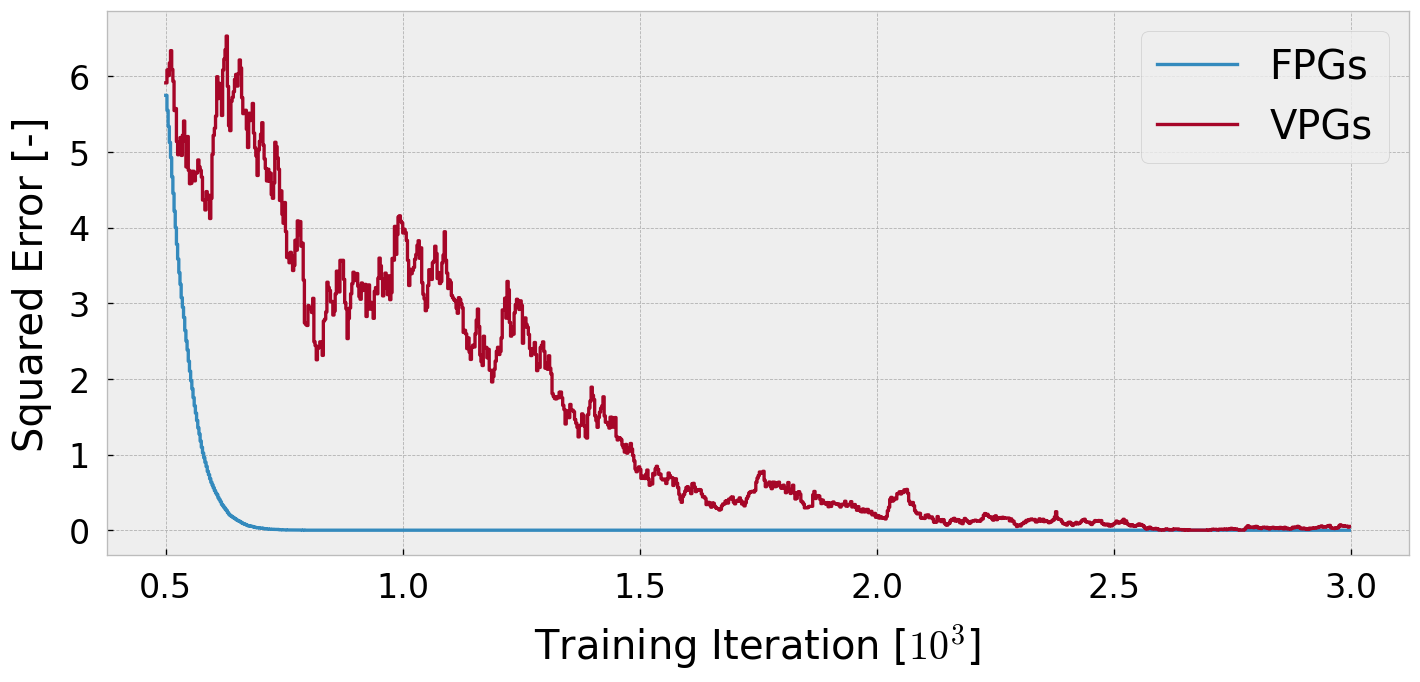}

        \caption{Squared error between $a_n$ and $c_n$ during learning for the search bandit with $k = n-1$.}\label{fig:search:aliasing}
    \end{subfigure}

    \caption{Performance analysis of FPGs on the search bandit domain.}\label{fig:search}
\end{figure*}

We began by examining the case where $\lambda = 0$ and the co-ordinate axes
were fully decoupled. For VPGs, we note that stability was only possible without a baseline
if an extremely low learning rate was used; see the appendix. Including a baseline dramatically
improved performance, with the action-dependent case, $b(s,a)$, also leading to
better asymptotic behaviour at the expense of a two orders of magnitude longer
train-time according to the wall-clock compared with all other algorithms (VPGs
and FPGs); see \autoref{tab:compute}. In comparison, FPGs, both with and without a
learnt state-dependent baseline, yielded significantly reduced variance,
leading to faster learning, more consistent convergence and highly robust
asymptotic stability.

\begin{table}
    \centering
    \caption{Empirical wall-clock estimates for the time-complexity (iterations per second) of VPGs and FPGs, with and without additional baselines. For each algorithm, the mean and sample standard deviation were computed across the 10 random seeds used to generate \autoref{fig:search:baselines}.}\label{tab:compute}

    \begin{tabular}{ll|ll}
        \toprule\toprule
        Method & Baseline & Mean [it / s] & Std Dev [it / s] \\
        \midrule
        \multirow{3}{*}{VPGs} & - & 10534 & 87 \\
        & $b(s)$ & 9885 & 81 \\
        & $b(s, a)$ & 80 & 1 \\
        \midrule
        \multirow{2}{*}{FPGs} & - & 9950 & 157 \\
        & $b(s)$ & 9670 & 126 \\
        \bottomrule\bottomrule
    \end{tabular}
\end{table}

We then studied the impact of coupling terms in the cost function; i.e.\
$\lambda > 0$. For this, we considered a family of penalties taking the form of
partially applied $\ell_2$ norms: $\zeta_k\!\left(\bm{a}\right) \doteq
\sqrt{\sum_{i=1}^k a_i^2}$, with $1 \leq k \leq n$. This set of functions
allowed us to vary the penalty attribution across the $n$ factors of
$\mathcal{A}$. Further examples
demonstrating the performance advantage of FPGs --- for $k = n$ and $k = n/2$ --- are given in the appendix. In
both cases, the improvement due to FB adjustments was found to be
non-negative for every combination of learning rate and action space. This
confirms that FPGs can indeed handle coupled targets and retains the variance
reduction benefits that were explored in \autoref{sec:variance_analysis}. As an
illustrative example, consider the case where $k=n-1$ and all but the last
action dimension are subject to a penalty. This is a particularly challenging
setting for VPGs because the magnitude of the combined cost function is much
greater than $\Delta_n\!\left(a_n\right)$, leading to an \emph{aliasing} of the
final component of the action vector in the gradient.  The result, as
exemplified in \autoref{fig:search:aliasing}, was that VPGs favoured reduction
of overall error, and was therefore exposed to poor per-dimension performance;
hence the increased noise in the $a_n$ error process. FPGs avoid this effect by
attributing gradients directly.

\subsection{Traffic Networks}\label{sec:numerical:traffic}
We now consider a classic traffic optimisation domain to demonstrate the
scalability of our approach to large real-world problems. In particular, we
consider variants of the ($3\times3$) grid network benchmark environment --- as
originally proposed by \citet{vinitsky:2018:benchmarks} --- that is provided by
the outstanding \texttt{Flow} framework~\cite{wu:2017:flow,liang:2018:rllib}.
In this setting, the RL agent is challenged with managing a set of traffic
lights with the objective of minimising the delay to vehicles travelling in the
network; the configuration should be taken as identical unless explicitly
stated. This requires a significant level of co-ordination, and indeed
multi-agent approaches have shown exemplary performance in this
space~\cite{vinitsky:2018:benchmarks,wei:2019:mixed}. However, much as with the
search bandit, the probability of aliasing effects increases substantially with
the number of lights/intersections; i.e.\ the dimensionality of the
action-space. This affects both single- and multi-agent approaches when the
global reward is used to optimise the policy.

\begin{figure*}
    \centering
    \begin{subfigure}[b]{0.48\linewidth}
        \centering
        \includegraphics[width=\linewidth]{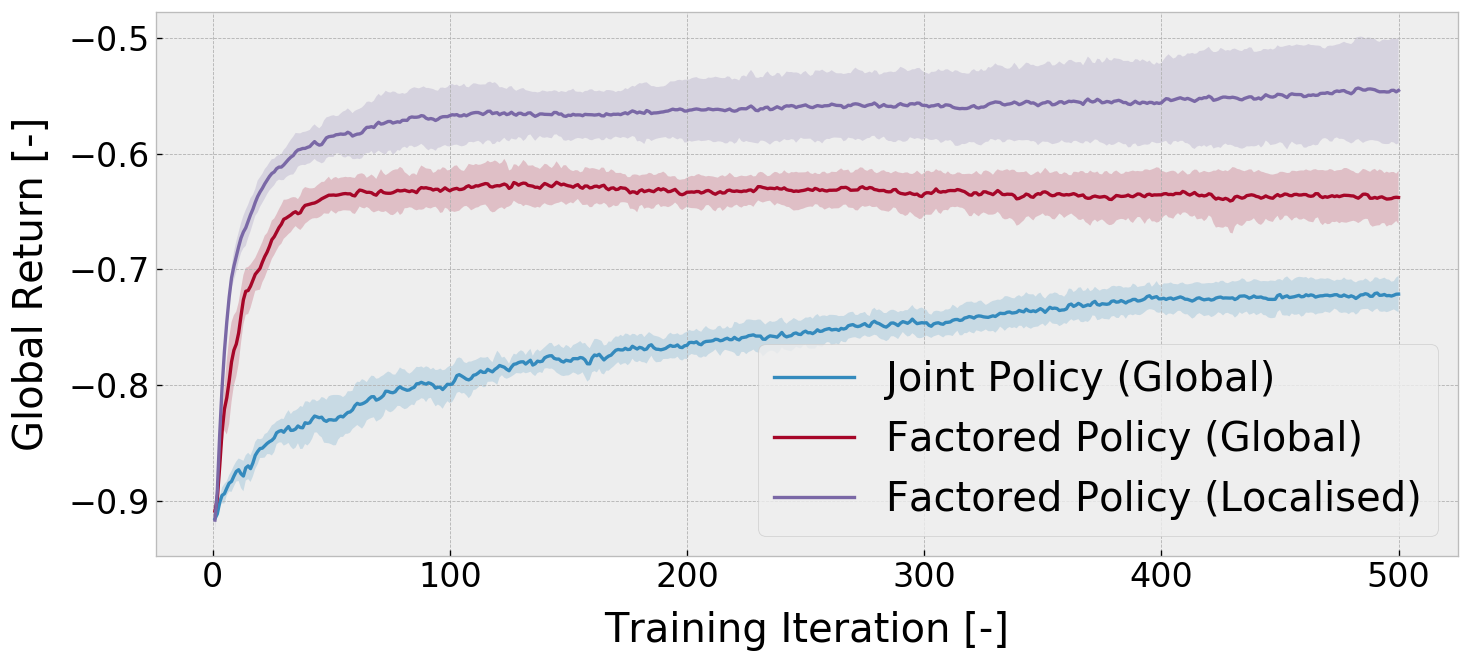}

        \caption{Convergence of joint/factored policies for the global reward,
        and using FPGs with the spatial baseline. Each curve depicts the
    mean value across 5 random seeds with standard error
    bands.}\label{fig:traffic:3b3}
    \end{subfigure}
    \hfill
    \begin{subfigure}[b]{0.48\linewidth}
        \centering
        \includegraphics[width=\linewidth]{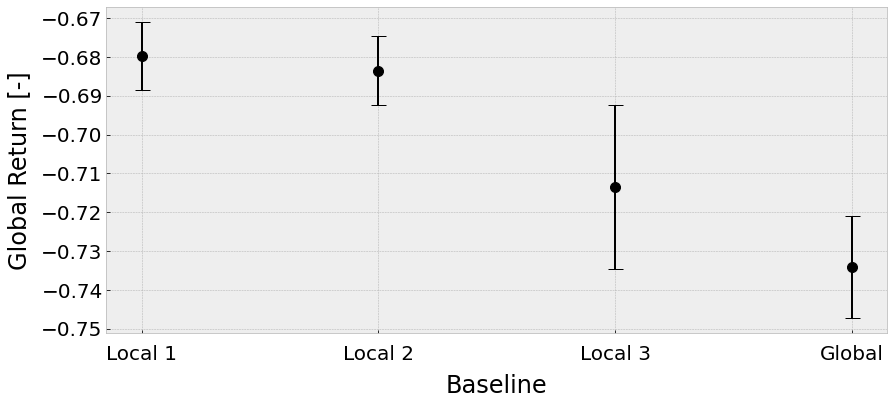}

        \caption{Performance degradation as a function of the $n$-level
        spatial baseline approximation in a $2\times 6$ grid network.
    Each point is the average terminal value across 5 seeds and with standard
    error bands.}\label{fig:traffic:2b6}
    \end{subfigure}

    \caption{Performance analysis of FPGs (with PPO and GAE) on the traffic network domain.}\label{fig:traffic}
\end{figure*}

To this end, we propose a ``baseline'' that \emph{removes reward terms
derived from streets/edges that are not directly connected to a given traffic
light}. This is based on the hypothesis that the local problem is sufficiently
isolated from the rest of the system that we may still find a (near)-optimal
solution; much as with local-form models~\cite{oliehoek:2012:influence}. Of
course, this could introduce bias at the cost of variance if we are incorrect
(see \autoref{sec:variance_analysis}), but this turns out to be an effective
trade-off as exemplified in \autoref{fig:traffic:3b3}.\footnote{Note that the standard errors may slightly underestimate the population level due to the low sample size.} In this plot we compare
the performance of three policies learnt using
PPO~\cite{schulman:2017:proximal} and GAE~\cite{schulman:2015:high} (with an
additional state-dependent baseline):
\begin{enumerate*}[label=(\arabic*)]
    \item a na\"{i}ve joint policy over the 9-dimensional action-space trained against the global reward;
    \item a shared policy trained on the global reward; and
    \item a shared policy using the local spatial baseline.
\end{enumerate*}
In methods 2 and 3, a shared policy refers to the use of a single univariate policy across all nine traffic lights, where only local information and identity variables are provided.
The global reward in this case was defined as the negative of the mean delay
introduced in the system minus a scaled penalty on vehicles at near standstill;
see the appendix for more details.

As expected, we observe that the
FBs improve learning efficiency, but, perhaps surprisingly, we
also find that the \emph{asymptotic behaviour is also superior}. We posit that
this relates to the fact that, with a fixed learning rate, stochastic gradient
descent cannot distinguish between points within a ball of the optimum solution
with radius that scales with the variance on the estimator. In other words,
significant reductions in variance, even if they introduce a small amount of
bias, may increase the likelihood of reaching the true optimal solution by
virtue of having much greater precision.

To better understand this trade-off, we also explored the impact of
``expanding'' the local baseline in a larger system of $2\times6$
intersections. With this new baseline we retain reward terms derived from
lights up to $n$ edges away in either the east or west directions. The variable
$n$ thus provides a dial to directly tweak the bias and variance of the policy
gradient estimator (i.e.\ increasing $n$ reduces bias but increases variance).
The result, as shown in \autoref{fig:traffic:2b6}, suggest that performance
decreases monotonically as a function of $n$. This corroborates the claim in
\autoref{sec:variance_analysis} that introducing some bias in exchange for a reduction in variance can be a worthwhile trade-off in large problems.

\section{Conclusion}
Factored policy gradients derive from the observation that many MOMDPs exhibit
redundancy in their reward structure. Here, we have characterised this
phenomenon using graphical modelling, and demonstrated that conditional independence
between factors of the action-space and the optimisation targets can be
exploited. The resulting family of algorithms
subsume many existing approaches in the literature. Our results in large-scale
bandit and concurrent traffic management problems suggest that FPGs are highly
suited to real-world problems, and may provide a way of scaling RL to domains
that have hitherto remained intractable. What's more, FPGs are compatible with
other techniques that improve policy gradient performance. For example, they
can be extended to use natural gradients by pre-multiplying \autoref{eq:fpg} by
the inverse Fisher information matrix~\cite{kakade:2001:natural}, and can even
use additional baselines to reduce variance even further, as in
\autoref{sec:numerical:traffic}. In future work we intend to address the
following interesting questions:
\begin{enumerate*}[label=(\alph*)]
    \item Can we infer/adapt the structure of influence networks?
    \item Are there canonical structures within $\bm{S}$ and $\bm{K}$?
    \item What theoretical insights can be derived from a more detailed
        analysis of the variance properties of FPGs?
\end{enumerate*}
We argue that factored approaches such as FPGs --- which are complementary to
ideas like influence-based abstraction~\cite{oliehoek:2012:influence} --- are
a promising direction for practical RL. Addressing some of these
questions, we believe, would thus be of great value to the community.



\begin{ack}
The authors would like to acknowledge our colleagues Joshua Lockhart, Jason
Long and Rui Silva for their input and suggestions at various key stages of the
research. This work was conducted by JPMorgan's AI Research group which has no external funding sources; i.e. it was a self-funded project. No sources of financial competing interests (or otherwise) are attributed to this line of research. 

\paragraph{Disclaimer}
This paper was prepared for informational purposes by the Artificial
Intelligence Research group of JPMorgan Chase \& Co and its affiliates (``J.P.\
Morgan''), and is not a product of the Research Department of J.P.\ Morgan.
J.P.\ Morgan makes no representation and warranty whatsoever and disclaims all
liability, for the completeness, accuracy or reliability of the information
contained herein. This document is not intended as investment research or
investment advice, or a recommendation, offer or solicitation for the purchase
or sale of any security, financial instrument, financial product or service, or
to be used in any way for evaluating the merits of participating in any
transaction, and shall not constitute a solicitation under any jurisdiction or
to any person, if such solicitation under such jurisdiction or to such person
would be unlawful.

\copyright{} 2021 JPMorgan Chase \& Co. All rights reserved.
\end{ack}

\bibliographystyle{plainnat}
\bibliography{main}

\clearpage
\appendix

\section{Factor Baselines}\label{app:baselines}
As shown in the main text, under the assumption that the influence network is unbiased, our factor baselines are indeed valid control variates. We prove this result below, repeating the statement itself for posterity and providing a supplementary lemma on control variates as a restatement of known results.

\begin{lemma}[\textbf{Control Variate}]\label{lem:control_variate}
    Let $X$, $Y$ and $Z$ be random variables where the law of $X$ conditional
    on $Z$ is denoted $\mathbb{P}_{\bth}\!\left(X \middle\vert Z\right)$, and
    $Y$ is independent of $X$ conditioned on $Z$; i.e.\ $X~\indep~Y~\vert~Z$.
    Then, we have that $\E{Y\,\gradbth\ln\mathbb{P}_{\bth}\!\left(X\right)} =
    0$.
\end{lemma}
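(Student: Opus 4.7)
The plan is to apply the tower property, condition on $Z$ to exploit the conditional independence, and then invoke the standard score-function identity (log-derivative trick) on the inner expectation. The notation $\mathbb{P}_{\bth}(X)$ in the conclusion should be read as the conditional law $\mathbb{P}_{\bth}(X \mid Z)$ which is the only $\bth$-dependent object in sight; this is the usual reading in score-function/control-variate arguments.

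First I would write
\begin{equation*}
\mathbb{E}\!\left[Y\,\gradbth \ln \mathbb{P}_{\bth}(X \mid Z)\right]
= \mathbb{E}\!\left[\,\mathbb{E}\!\left[Y\,\gradbth \ln \mathbb{P}_{\bth}(X \mid Z)\,\middle\vert\, Z\right]\right]
\end{equation*}
by the tower property. Because $X \indep Y \mid Z$, the integrand inside the inner expectation splits into a product of $Z$-measurable factors, so the inner conditional expectation factorises as
\begin{equation*}
\mathbb{E}\!\left[Y \mid Z\right]\,\mathbb{E}\!\left[\gradbth \ln \mathbb{P}_{\bth}(X \mid Z)\,\middle\vert\, Z\right].
\end{equation*}

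Next I would handle the second factor with the log-derivative trick: assuming the usual regularity conditions that allow interchanging $\gradbth$ and the conditional integral (dominated convergence, differentiability of $\mathbb{P}_{\bth}(\cdot\mid Z)$ in $\bth$, and a common support not depending on $\bth$), we have
\begin{equation*}
\mathbb{E}\!\left[\gradbth \ln \mathbb{P}_{\bth}(X \mid Z) \,\middle\vert\, Z\right]
= \int \gradbth \mathbb{P}_{\bth}(x \mid Z)\, \mathrm{d}x
= \gradbth \!\int \mathbb{P}_{\bth}(x \mid Z)\, \mathrm{d}x
= \gradbth 1 = 0.
\end{equation*}
Therefore the inner conditional expectation is identically zero as a function of $Z$, and taking the outer expectation gives the claim. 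The continuity/differentiability hypothesis on $\pi_{\bth}$ stated in the background section supplies exactly the regularity needed.

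The only genuinely delicate point is the swap of differentiation and integration in the log-derivative step, which requires the standard dominated-convergence-type assumptions (smoothness of $\mathbb{P}_{\bth}(\cdot\mid Z)$ in $\bth$ and integrable dominating functions). Beyond that, the argument is mechanical: tower property, conditional independence, score-function identity. No combinatorial or structural facts about the influence network are needed here; \autoref{lem:control_variate} is a generic tool to be applied later when verifying that the factor baselines of \autoref{def:factor_baselines} integrate to zero against the corresponding factor scores.
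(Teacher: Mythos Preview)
Your proposal is correct and follows essentially the same route as the paper: tower property, factorisation via $X \indep Y \mid Z$, and then the vanishing of the conditional score $\CE{\gradbth\ln\mathbb{P}_{\bth}(X)}{Z}=0$. The paper's proof is terser and does not spell out the regularity conditions for the log-derivative step, but otherwise the arguments coincide.
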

\begin{proof}
    The proof follows from the law of iterated expectations:
    \begin{displaymath}
        \E{Y\,\gradbth\ln\mathbb{P}_{\bth}\!\left(X\right)} =
        \E{\CE{Y\,\gradbth\ln\mathbb{P}_{\bth}\!\left(X\right)}{Z}} =
        \E{\CE{Y}{Z}\CE{\gradbth\ln\mathbb{P}_{\bth}\!\left(X\right)}{Z}} = 0,
    \end{displaymath}
    since $\CE{\gradbth\ln\mathbb{P}_{\bth}\!\left(X\right)}{Z} = 0$.
\end{proof}

\begin{replemma}{lem:factor_baselines}
    Factor baselines are valid control variates if $\mathcal{G}_\Sigma$ is true to the MDP (i.e.\ unbiased).
\end{replemma}
\begin{proof}
    Consider an objective $J\!\left(\bth\right)$ of the form defined in
    \autoref{eq:linear_psi}, a factored influence network $\mathcal{G}_\Sigma$ and
    a $\Sigma$-factored policy $\cpt_{\bth}(\bm{a}|s) \doteq \prod_{i=1}^n
    \cpt_{i,\bth}(\sigma^\pi_i\!\left(\bm{a}\right)|s)$. Now, let us define a
    stochastic policy gradient estimator
    \begin{displaymath}
        \gradbth J\!\left(\bth\right) =
        \EE{\pi_{\bth},\rho_{\pi_{\bth}}}{\bm{g}\!\left(s, \bm{a}\right) \doteq
        \sum_{i=1}^n \left[\psi\!\left(s, \bm{a}\right) + b^\textsc{C}_i\!\left(s,
        \bar\sigma^\pi_i\!\left(\bm{a}\right)\right)\right] \bm{z}_i},
    \end{displaymath}
    where $\bm{z}_i \doteq
    \gradbth\ln\cpt_{i,\bth}(\sigma^\pi_i\!\left(\bm{a}\right)|s)$ and
    $b^\textsc{C}_i\!\left(s, \bar\sigma^\pi_i\!\left(\bm{a}\right)\right)$ is the
    $i$\textsuperscript{th} factor baseline (see \autoref{def:factor_baselines}).
    If $\mathcal{G}_\Sigma$ is unbiased then we have mutual independence between each action partition and, since $b^\textsc{C}_i\!\left(s,
    \bar\sigma^\pi_i\!\left(\bm{a}\right)\right)$ depends only on $s$ and
    $\bar\sigma^\pi_i\!\left(\bm{a}\right)$ --- i.e.\ the action elements that are
    not in the support of $\pi_{i,\bth}$ --- we can readily apply
    \autoref{lem:control_variate}, thus concluding the proof.
\end{proof}

\subsection{Optimality}
In contrast to the factor baselines, solving for the optimal baseline in general is a non-trivial challenge.
Indeed, the results presented by \citet{wu:2018:variance} rely on a key
assumption that the policy factors do not share parameters in order to simplify
the analysis; i.e.\ that $\dot<\bm{z}_i, \bm{z}_j> \approx 0$ for any $i, j \in
\left[\left\lvert\Sigma\right\rvert\right]$. Below we explain why this is a
difficult problem, and leave it to future work to find the solution.

For notational convenience, let $\bm{g}\!\left(s, \bm{a}\right) \doteq
\sum_{i=1}^n \bm{g}_i\!\left(s, \bm{a}\right)$ such that the total variance on
the gradient is given by
\begin{equation}\label{eq:pgrad:total_var}
    \V{\bm{g}\!\left(s, \bm{a}\right)} = \sum_{i=1}^n \sum_{j=1}^n
    \textrm{Cov}\!\left[\bm{g}_i\!\left(s, \bm{a}\right), \bm{g}_j\!\left(s,
    \bm{a}\right)\right].
\end{equation}
The $n$ \emph{optimal baselines} are given by the values that minimise
\autoref{eq:pgrad:total_var}; i.e. $b_i^\star\!\left(s,
\bar\sigma^\pi_i\left(\bm{a}\right)\right) \doteq \argmin_{b_i}
\V{\bm{g}\!\left(s, \bm{a}\right)}$ for all $i \in [n]$.  To solve this
problem, we first apply the factor baseline decomposition such that
$b_i^\star\!\left(s, \bar\sigma^\pi_i\!\left(\bm{a}\right)\right) =
b^\textsc{V}_i\!\left(s, \bar\sigma^\pi_i\!\left(\bm{a}\right)\right) +
b^\textsc{C}_i\!\left(s, \bar\sigma^\pi_i\!\left(\bm{a}\right)\right)$.  This
implies that the optimisation problem can be reduced to finding
$\argmin_{b^\textsc{V}_i} \V{\bm{g}\!\left(s, \bm{a}\right)}$ when $b_i$ is
replaced with $b_i^\star$ for all $i \in [n]$. Now, let $\bm{x}_i \doteq
\left[\bm{K}_\Sigma\,\bm{\psi}\!\left(s, \bm{a}\right)\right]_i \bm{z}_i$ and
$\bm{y}_i \doteq b_i^\textsc{V}\!\left(s,
\bar\sigma^\pi_i\!\left(\bm{a}\right)\right) \bm{z}_i$ such that
$\bm{g}_i\!\left(s, \bm{a}\right) = \bm{x}_i + \bm{y}_i$. Note that while
$\bm{y}_i$ depends on the full action, $\bm{x}_i$ depends only on the actions
influencing the targets in $\left[\bm{K}_\Sigma\,\bm{\psi}\!\left(s,
\bm{a}\right)\right]_i$. Removing terms that are independent of
$b_i^\textsc{V}$ thus yields the following:
\begin{align*}
    \argmin_{b^\textsc{V}_i} \V{\bm{g}}
        &= \argmin_{b^\textsc{V}_i} \left\{\V{\bm{g}_i} +
        \sum_{j\ne i}^n\textrm{Cov}\!\left[\bm{g}_i, \bm{g}_j\right]\right\},
        \\
        &= \argmin_{b^\textsc{V}_i} \left\{\V{\bm{x}_i} +
        \V{\bm{y}_i} + 2\,\textrm{Cov}\!\left[\bm{x}_i, \bm{y}_i\right] +
        \sum_{j\ne i}^n\textrm{Cov}\!\left[\bm{x}_i, \bm{x}_j\right] +
        \textrm{Cov}\!\left[\bm{x}_i, \bm{y}_j\right] +
        \textrm{Cov}\!\left[\bm{y}_i, \bm{x}_j\right] +
        \textrm{Cov}\!\left[\bm{y}_i, \bm{y}_j\right]\right\}, \\
        &= \argmin_{b^\textsc{V}_i} \left\{\V{\bm{y}_i} +
        2\,\textrm{Cov}\!\left[\bm{x}_i, \bm{y}_i\right] +
        \sum_{j\ne i}^n\textrm{Cov}\!\left[\bm{x}_i, \bm{y}_j\right] +
        \textrm{Cov}\!\left[\bm{y}_i, \bm{x}_j\right] +
        \textrm{Cov}\!\left[\bm{y}_i, \bm{y}_j\right]\right\}.
\end{align*}
To solve the equation above, we first expand each component and remove any
redundant terms. For the variance on $\bm{y}_i$, we have that
\begin{align}
    \V{\bm{y}_i}
        &= \EE{\bm{a}}{\left(b_i^\textsc{V}\right)^2 \dot<\bm{z}_i, \bm{z}_i>}
        + \dot<\EE{\bm{a}}{b_i^\textsc{V} \bm{z}_i}, \EE{\bm{a}}{b_i^\textsc{V}
        \bm{z}_i}>, \nonumber \\
        &= \EE{\bm{a}}{\left(b_i^\textsc{V}\right)^2 \dot<\bm{z}_i, \bm{z}_i>}
        + \dot<\EE{\bar\sigma_i^\pi\!\left(\bm{a}\right)}{b_i^\textsc{V}}
        \EE{\sigma_i^\pi\!\left(\bm{a}\right)}{\bm{z}_i},
        \EE{\bar\sigma_i^\pi\!\left(\bm{a}\right)}{b_i^\textsc{V}},
        \EE{\sigma_i^\pi\!\left(\bm{a}\right)}{\bm{z}_i}>, \nonumber \\
        &= \EE{\bm{a}}{\left(b_i^\textsc{V}\right)^2 \dot<\bm{z}_i, \bm{z}_i>},
        \nonumber \\
        &= \EE{\sigma_i^\pi\!\left(\bm{a}\right)}{ \dot<\bm{z}_i, \bm{z}_i>}
        \EE{\bar\sigma_i^\pi\!\left(\bm{a}\right)}{\left(b_i^\textsc{V}\right)^2}.
        \label{eq:variance_yi}
\end{align}
It follows from this analysis that the covariance between $\bm{y}_i$ and
$\bm{y}_j$ for any $i, j \in \left[\left\lvert\Sigma\right\rvert\right]$, with
$i \ne j$, is given by
\begin{equation}\label{eq:covariance_yi_yj}
    \textrm{Cov}\!\left[\bm{y}_i, \bm{y}_j\right] = \EE{\bm{a}}{b_i^\textsc{V}
    b_j^\textsc{V} \dot<\bm{z}_i, \bm{z}_j>}.
\end{equation}
Finally, we can expand the covariance between $\bm{x}_i$ and $\bm{y}_i$,
\begin{align}
    \textrm{Cov}\!\left[\bm{x}_i, \bm{y}_i\right]
        &= \EE{\bm{a}}{\left[\bm{K}_\Sigma\,\bm{\psi}\right]_i b_i^\textsc{V}
        \dot<\bm{z}_i, \bm{z}_i>} +
        \dot<\EE{\bm{a}}{\left[\bm{K}_\Sigma\,\bm{\psi}\right]_i \bm{z}_i},
        \EE{\bm{a}}{b_i^\textsc{V} \bm{z}_i}>, \nonumber \\
        &= \EE{\bm{a}}{\left[\bm{K}_\Sigma\,\bm{\psi}\right]_i b_i^\textsc{V}
        \dot<\bm{z}_i, \bm{z}_i>} +
        \dot<\EE{\bm{a}}{\left[\bm{K}_\Sigma\,\bm{\psi}\right]_i \bm{z}_i},
        \EE{\bar\sigma_i^\pi\!\left(\bm{a}\right)}{b_i^\textsc{V}}
        \EE{\sigma_i^\pi\!\left(\bm{a}\right)}{\bm{z}_i}>, \nonumber \\
        &= \EE{\bm{a}}{\left[\bm{K}_\Sigma\,\bm{\psi} \right]_i b_i^\textsc{V}
        \dot<\bm{z}_i, \bm{z}_i>}, \nonumber \\
        &= \EE{\bar\sigma_i^\pi\!\left(\bm{a}\right)}{b_i^\textsc{V}}
        \EE{\sigma_i^\pi\!\left(\bm{a}\right)}{\left[\bm{K}_\Sigma\,\bm{\psi}
        \right]_i \dot<\bm{z}_i, \bm{z}_i>}, \label{eq:covariance_xi_yi}
\end{align}
and similarly resolve the cross-covariance terms:
\begin{equation}\label{eq:covariance_xi_yj}
    \textrm{Cov}\!\left[\bm{x}_i, \bm{y}_j\right] =
    \dot<\EE{\bar\sigma_i^\pi\!\left(\bm{a}\right)}{b_i^\textsc{V}\bm{z}_i},
    \EE{\sigma_i^\pi\!\left(\bm{a}\right)}{\left[\bm{K}_\Sigma\,\bm{\psi}
    \right]_i \bm{z}_j}>.
\end{equation}

The quantities above provide us with a platform to find solutions. For example,
the optimal baseline approximation proposed by \citet{wu:2018:variance} can be
found if we assume that $\dot<\bm{z}_i, \bm{z}_j> \approx 0$ since
\autoref{eq:covariance_yi_yj} and \autoref{eq:covariance_xi_yi} go to zero.
However, in the general case the problem is not quite so simple. The reason for
this is that the baselines interact via the cross-covariance term in
\autoref{eq:covariance_yi_yj}. As a result, we cannot solve for each
$b_i^\textsc{V}$ independently of the others. Instead, we have a system of
polynomial equations which may not have a unique solution. In fact, since each
equation has degree $d = 2$, it follows the number of solutions can be as large
$d^{\left\lvert\Sigma\right\rvert}$. In general, there are very few methods
that can solve these type of systems, and those that can are limited to bounds
of approximately $d^{\left\lvert\Sigma\right\rvert} \approx 20$. It seems
reasonable to assume that any solution, while viable, would be computationally
impractical, but we leave it to future work to establish this result formally.

\section{Factored Policy Gradients}
The validity of factor baselines, as shown in the previous section, extends to policy gradient themselves. As discussed in the main text, we can show that FPGs are unbiased and satisfy certain variance bounds compared with conventional policy gradients. We restate the original propositions below and provide the proofs in full.

\begin{repproposition}{prop:fpg}
    Take a $\Sigma$-factored policy $\pt_\bth(\bm{a}|s)$ and
    $\left\lvert\bm{\theta}\right\rvert \times \left\lvert\Sigma\right\rvert$
    matrix of scores $\bm{S}\!\left(s, \bm{a}\right)$. Then, for target vector
    $\bm{\psi}\!\left(s, \bm{a}\right)$ and multipliers $\bm{\lambda}$, the
    \emph{FPG} estimator
    \begin{displaymath}
        \bm{g}^\textsc{C}\!\left(s, \bm{a}\right) \doteq \bm{S}\!\left(s,
        \bm{a}\right)\bm{K}_\Sigma\,\bm{\lambda}\circ\bm{\psi}\!\left(s,
        \bm{a}\right),
    \end{displaymath}
    is an unbiased estimator of the true policy gradient; i.e.\ $\gradbth J\!\left(\bth\right) = \EE{\pi_{\bth},
    \rho_{\pi_{\bth}}}{\bm{g}^\textsc{C}\!\left(s, \bm{a}\right)}$.
\end{repproposition}
\begin{proof}
    Let $\mathcal{G}_\Sigma$ denote an $\Sigma$-factored influence network with
    policy $\cpt_{\bth}(\bm{a}|s) \doteq \prod_{i=1}^n
    \cpt_{i,\bth}(\sigma^\pi_i\!\left(\bm{a}\right)|s)$, and global target function
    $\psi\!\left(s, \bm{a}\right) = \sum_{j=1}^m \lambda_j \psi_j\!\left(s,
    \sigma_j\!\left(\bm{a}\right)\right) = \dot<\bm{\lambda}, \bm{\psi}\!\left(s,
    \bm{a}\right)>$. The score matrix, $\bm{S}\!\left(s, \bm{a}\right) \doteq
    \left[\bm{z}_i^\top, \dots, \bm{z}_n^\top\right]^\top$, then has size
    $\left\lvert\bm{\theta}\right\rvert \times n$, where $\bm{z}_i \doteq
    \gradbth\ln\cpt_{i,\bth}(\sigma^\pi_i\!\left(\bm{a}\right)|s)$. From this we
    can express the conventional policy gradient with no baseline as the linear
    product $\bm{g}\!\left(s, \bm{a}\right) = \bm{S}\!\left(s, \bm{a}\right)
    \bm{J}_{n,m} \bm{\psi}\!\left(s, \bm{a}\right)$, where $\bm{J}_{n,m}$ is the
    $n\times m$ all-ones matrix. By \autoref{lem:factor_baselines} the factor
    baselines, $\left[\left(1 - \bm{K}_\Sigma\right) \bm{\psi}\!\left(s,
    \bm{a}\right)\right]_i$, are valid control variates and thus have expected
    value of zero under $\pi$. This means that they can be subtracted without
    introducing bias in the policy gradient, yielding
    \begin{displaymath}
        \bm{g}\!\left(s, \bm{a}\right) = \underbrace{\bm{S}\!\left(s,
        \bm{a}\right) \bm{J}_{n,m} \bm{\psi}\!\left(s,
        \bm{a}\right)}_{\textrm{Vanilla PG}} - \underbrace{\bm{S}\!\left(s,
        \bm{a}\right) \left(1 - \bm{K}_\Sigma\right) \bm{\psi}\!\left(s,
        \bm{a}\right)}_{\textrm{Factor Correction}} = \bm{S}\!\left(s,
        \bm{a}\right) \bm{K}_\Sigma\, \bm{\psi}\!\left(s, \bm{a}\right).
    \end{displaymath}
    It follows that $\gradbth J\!\left(\bth\right) = \EE{\pi_{\bth},
    \rho_{\pi_{\bth}}}{\bm{g}^\textsc{C}\!\left(s, \bm{a}\right)}$ since
    $\EE{\pi_{\bth}, \rho_{\pi_{\bth}}}{\bm{g}^\textsc{V}\!\left(s, \bm{a}\right)}
    = \EE{\pi_{\bth}, \rho_{\pi_{\bth}}}{\bm{g}^\textsc{C}\!\left(s,
    \bm{a}\right)}$ which concludes the proof.
\end{proof}

\begin{repproposition}{prop:variance_reduction}
    Let $\bm{g}_i$ denote a gradient estimate for the $i$\textsuperscript{th}
    factor of a $\Sigma$-factored policy $\pi_{\bth}$
    (\autoref{eq:policy_factorisation}). Then, $\Delta\mathbb{V}_i \doteq
    \V{\bm{g}_i^\textsc{V}} - \V{\bm{g}_i^\textsc{C}}$, satisfies
    \begin{displaymath}
        \Delta\mathbb{V}_i = \alpha_i
        \,\EE{\bar\sigma_i^\pi\!\left(\bm{a}\right)}{\left(b_i^\textsc{C}\right)^2}
        + 2\beta_i\EE{\bar\sigma_i^\pi\!\left(\bm{a}\right)}{b_i^\textsc{C}},
    \end{displaymath}
    where $\bm{z}_i \doteq \gradbth\ln\cpt_{i,\bth}(\bm{a}|s)$, $\alpha_i
    \doteq \EE{\sigma_i^\pi\!\left(\bm{a}\right)}{\dot<\bm{z}_i, \bm{z}_i>}
    \geq 0$ and $\beta_i \doteq
    \EE{\sigma_i^\pi\!\left(\bm{a}\right)}{\dot<\bm{z}_i, \bm{z}_i>\left(\psi +
    b_i^\textsc{C}\right)}$.
\end{repproposition}

\begin{proof}
    First, let us denote by $\bm{X}$ and $\bm{Y}$ two (possibly dependent) random
    variables, with $\bm{Z} \doteq \bm{X} - \bm{Y}$ such that
    \begin{align*}
        \Delta\mathbb{V} \doteq \V{\bm{X}} - \V{\bm{Y}}
            &= \V{\bm{Z} + \bm{Y}} - \V{\bm{Y}}, \\
            &= \V{\bm{Z}} + \V{\bm{Y}} + 2\textrm{Cov}\!\left[\bm{Y},
            \bm{Z}\right] - \V{\bm{Y}}, \\
            &= \V{\bm{Z}} + 2\,\textrm{Cov}\!\left[\bm{Y}, \bm{Z}\right].
    \end{align*}
    From \autoref{prop:fpg}, we can express the vanilla and factored policy gradient
    estimators for the $i$\textsuperscript{th} factor as $\psi\,\bm{S}_{\cdot,i}$
    and $\left(\psi - b_i^\textsc{C}\right) \bm{S}_{\cdot,i}$, respectively, where
    the function arguments have been omitted for clarity. Assigning these values to
    $\bm{X}$ and $\bm{Y}$ we arrive at the equality relations
    \begin{align*}
        \V{\bm{Z}} = \V{b_i^\textsc{C}\bm{z}_i}
            &= \EE{\pi}{\dot<\bm{z}_i, \bm{z}_i> \left(b_i^\textsc{C}\right)^2} \\
        \textrm{Cov}\!\left[\bm{Y}, \bm{Z}\right] = \textrm{Cov}\!\left[\left(\psi
        - b_i^\textsc{C}\right) \bm{z}_i, b_i^\textsc{C}\bm{z}_i\right]
            &= \EE{\pi}{\dot<\bm{z}_i, \bm{z}_i> \left(\psi -
            b_i^\textsc{C}\right)b_i^\textsc{C}}.
    \end{align*}
    The former follows from the fact that $\EE{\pi}{\bm{S}_{\cdot, i}} = 0$ for all
    $i$, and latter by noting that $\E{\bm{Z}} = 0$ due to
    \autoref{lem:factor_baselines}. We can now exploit the independencies implied
    by the influence network, $\mathcal{G}_\Sigma$, to give
    \begin{displaymath}
        \Delta\mathbb{V}_i = \EE{\sigma_i^\pi\!\left(\bm{a}\right)}{\dot<\bm{z}_i,
            \bm{z}_i>}
            \EE{\bar\sigma_i^\pi\!\left(\bm{a}\right)}{\left(b_i^\textsc{C}\right)^2}
            + 2\EE{\sigma_i^\pi\!\left(\bm{a}\right)}{\dot<\bm{z}_i, \bm{z}_i>
                \left(\psi -
            b_i^\textsc{C}\right)}\EE{\bar\sigma_i^\pi\!\left(\bm{a}\right)}{b_i^\textsc{C}},
    \end{displaymath}
    This is the desired result and thus concludes the proof.
\end{proof}

\begin{repcorollary}{corr:non_negative}
    Let $\psi\!\left(s, \bm{a}\right)$ be of the form in
    \autoref{eq:linear_psi}.  If $\psi_j\!\left(s, \bm{a}\right) \geq
    \underline{\psi_j}$ for all $(s, \bm{a}) \in \mathcal{S}\times\mathcal{A}$
    and $j\in[m]$, with $\left\lvert\underline{\psi_j}\right\rvert < \infty$,
    then there exists a linear translation, $\psi_i \to \psi_i - \sum_{j=1}^m
    \lambda_j \underline{\psi_j}$, which leaves the gradient unbiased but
    yields $\Delta\mathbb{V}_i \geq 0$.
\end{repcorollary}

\begin{proof}
    Take a target set $\Psi$ and let $\underline{\psi_j} \doteq \inf_{\mathcal{S},
    \mathcal{A}} \psi_j$ for each $\psi \in \Psi$. The unbiasedness claim follows
    from the fact that these terms go to zero in expectation when weighted by the
    score functions; they are constants. The variance claim is also trivial, since
    $\psi_j + \sum_{k=1}^m \lambda_k \inf_{\mathcal{S}, \mathcal{A}} \psi_k$ are
    non-negative and, due to the summation over all $k\in[m]$, no CB can yield a
    negative value. Each term in \autoref{eq:var_diff}
    (\autoref{prop:variance_reduction}) must also be non-negative, which concludes the
    proof.
\end{proof}

\section{Minimum Factorisation}\label{app:mf}
The minimum factorisation of an influence network provides a natural way of partitioning action nodes into independent policy distributions. In the main text it was also stated that such a characterisation is natural to the problems we study. We repeat this result below and provide the proof herein.

\begin{reptheorem}{thm:mfs}
    The MF $\Sigma_\mathcal{G}^\star$ always exists and is unique.
\end{reptheorem}
\begin{proof}
    Bipartite graphs always have at least one valid biclique and
    thus MF. Now, for uniqueness, let $\mathcal{G}$ denote an influence network. If
    $\mathcal{G}$ is complete, then we automatically satisfy the uniqueness
    property since the MF will contain a single biclique that covers all vertices
    in $I_\mathcal{A}$. If $\mathcal{G}$ is incomplete, then the proof can be shown
    through contradiction. Suppose that $A$ and $B$ are both MFs and therefore
    correspond to minimum biclique vertex covers, disjoint amongst $I_\mathcal{A}$.
    We know then that $A$ and $B$ must have the same dimensionality since they are
    optimal --- i.e.\ contain the same number of bicliques --- but, if they are
    distinct, then there must also exist at least one biclique $a \in A$ that is
    not in $B$. Since both MFs are defined over the same graph $\mathcal{G}$, the
    elements of $a$ must be distributed between at least 2 distinct bicliques in
    $B$. However, if this is the case, the union of these subgraphs would also form
    a valid biclique. The new cover, $B'$, containing the merged bicliques is valid
    and has dimensionality $\left\lvert B'\right\rvert < \left\lvert B\right\rvert
    = \left\lvert A\right\rvert$. This implies that neither $A$ nor $B$ can be MFs.
    Since the same must be true for any $A$ and $B$, it follows that there can be
    only one MF, thus concluding the proof.
\end{proof}


\section{Search Bandit}\label{sec:appendix:search_bandit}
The search bandit was designed to exhibit an influence network as illustrated in \autoref{fig:graph:search_bandit}. Below we summarise the hyperparameters for the two key experiments --- namely the baseline comparison (BC) and aliasing demonstration (AD):
\begin{description}
    \item[BC] All algorithms were trained using a learning rate of 0.5 except for VPGs w/o a baseline which was only stable with a step size of 0.001. The state-based (i.e. scalar) baselines, $b(s) = b$, were trained using temporal-difference methods with a learning rate of 0.1. The action-dependent baseline, $b(s, a) \doteq -\left\lvert\left\lvert\bm{a} - \bm{w}\right\rvert\right\rvert_1 / \left\lvert\mathcal{A}\right\rvert$, was similarly trained using SARSA with a learning rate of 0.1.\footnote{In this formalism we only have sub-derivatives. For simplicity we simply assigned the gradient when a given action was equal to the weight.} An additional 1000 episodes were also used at the start of each run to pre-train the baseline if used.
    \item[AD] In the aliasing experiment, both VPGs and FPGs were trained for a 100-dimensional action-space with a regularisation penalty of $\lambda = 0.01$ on the first 99 action-components. VPGs were instantiated with a learning rate of 0.001, and FPGs with a rate of 0.01.
\end{description}

\paragraph{Additional results.}
In addition to the results presented in the paper, we also include Figures~\ref{fig:search:stability}-\ref{fig:search:discrepancy}. These explore the impact of the factor baseline across a set of dimensionalities and learning rates. We show that VPGs are very sensitive to the learning rate, especially when $\left\lvert\mathcal{A}\right\rvert$ is large. FPGs, on the other hand, converge on the optimal solution consistently regardless of the problem instance. Similarly, we show that the mean number of steps required to reach such a solution for a finite budget is much lower for FPGs compared with VPGs.

\paragraph{Implications for MDPs.}
The search bandit is an interesting problem environment because, in many ways,
it can emulate the learning process in arbitrary MDPs. This follows because,
without loss of generality, we can always transform an MDP into a (possibly
infinite) set of continuum multi-armed bandits, one occupying every unique
state $s \in \mathcal{S}$. The question is how to define the cost function in
order to achieve some form of equivalence. For example, if we consider
deterministic policies, then we can clearly define the cost to be
$\textrm{Cost}\!\left(\bm{a}\right) \doteq \left\lvert\left\lvert \bm{a} -
\pi^\star\!\left(s\right) \right\rvert\right\rvert_p$, for $p \geq 1$, and have
the same solution set as given under Bellman optimality. This implies that the performance observed in the search
bandit it likely to tell us about the performance in full MDPs. The results presented in \autoref{sec:numerical:bandit} may thus provide
evidence that FPGs will outperform VPGs for arbitrarily challenging
MDPs.

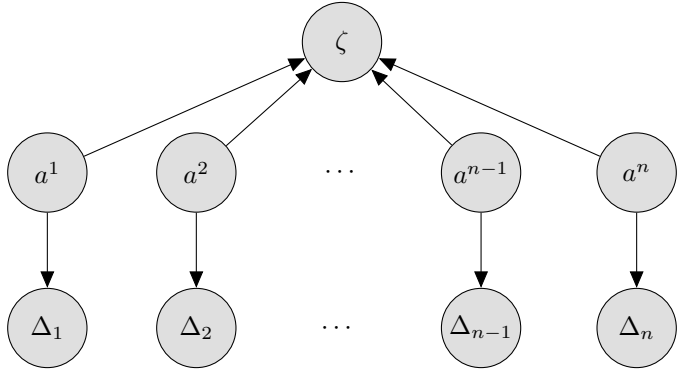
\begin{figure*}[!ht]
    \centering
    \begin{tikzpicture}
        \node (rdots) {$\cdots$};
        \node[obs, left=of psi2, minimum size=30pt] (psi1) {$\Delta_1$};
        \node[obs, left=of rdots, minimum size=30pt] (psi2) {$\Delta_2$};
        \node[obs, right=of rdots, minimum size=30pt] (psinm1) {$\Delta_{n-1}$};
        \node[obs, right=of psinm1, minimum size=30pt] (psin) {$\Delta_n$};

        \node[obs, above=of psi1, minimum size=30pt] (a1) {$a^{1}$};
        \node[obs, above=of psi2, minimum size=30pt] (a2) {$a^{2}$};
        \node[obs, above=of psinm1, minimum size=30pt] (anm1) {$a^{n-1}$};
        \node[obs, above=of psin, minimum size=30pt] (an) {$a^{n}$};
        \node (adots) at ($(a1)!0.5!(an)$) {$\cdots$};

        \node[obs, above=of adots, minimum size=30pt] (psi0) {$\zeta$};

        \edge {a1} {psi0,psi1};
        \edge {a2} {psi0,psi2};
        \edge {anm1} {psi0,psinm1};
        \edge {an} {psi0,psin};
    \end{tikzpicture}

    \caption{Influence network of the search bandit problem with optional
    coupling term.}
    \label{fig:graph:search_bandit}
\end{figure*}

\begin{figure*}[t]
    \centering
    \begin{subfigure}[b]{0.48\linewidth}
        \centering
        \includegraphics[width=\linewidth]{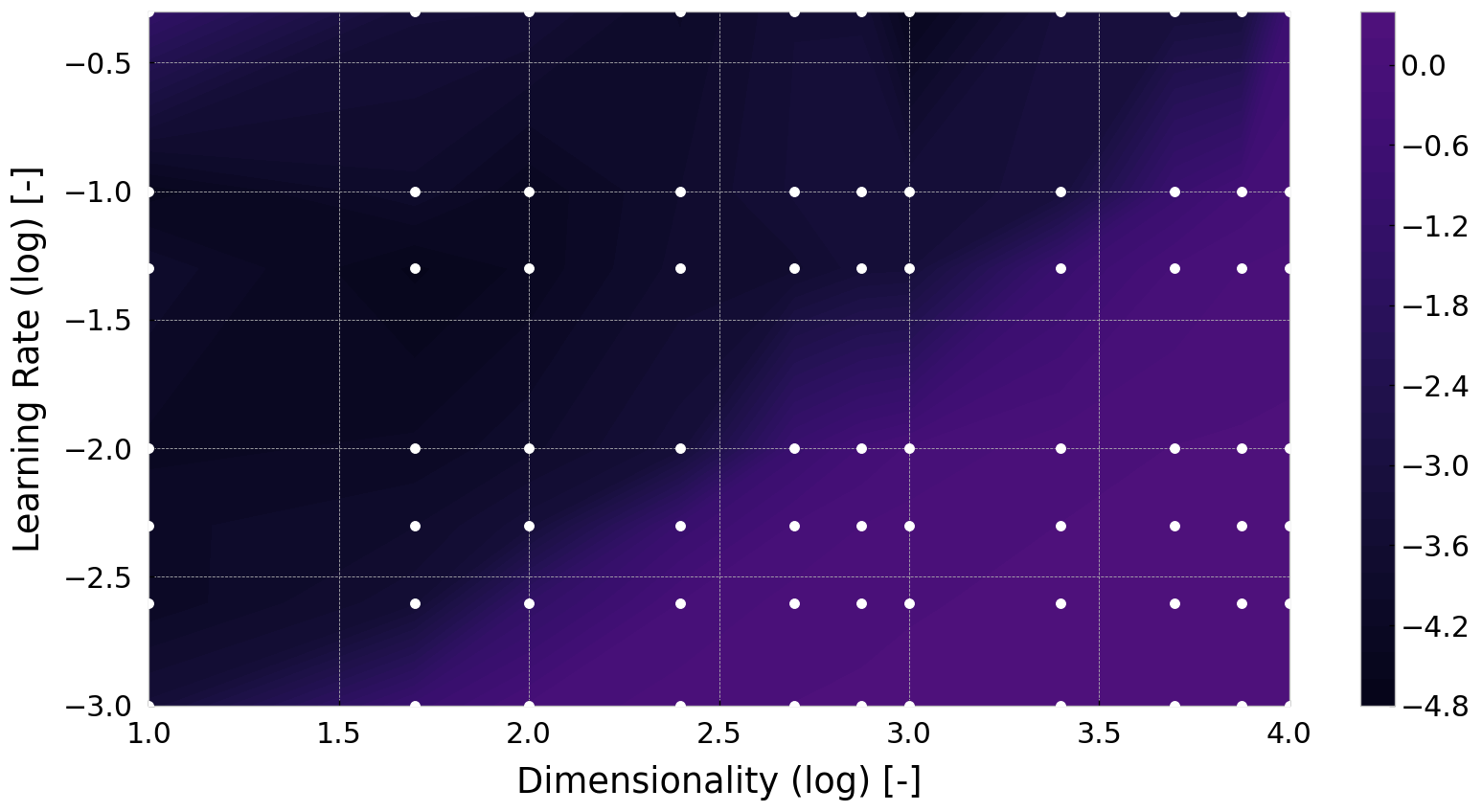}

        \caption{FPGs}
    \end{subfigure}
    \hfill
    \begin{subfigure}[b]{0.48\linewidth}
        \centering
        \includegraphics[width=\linewidth]{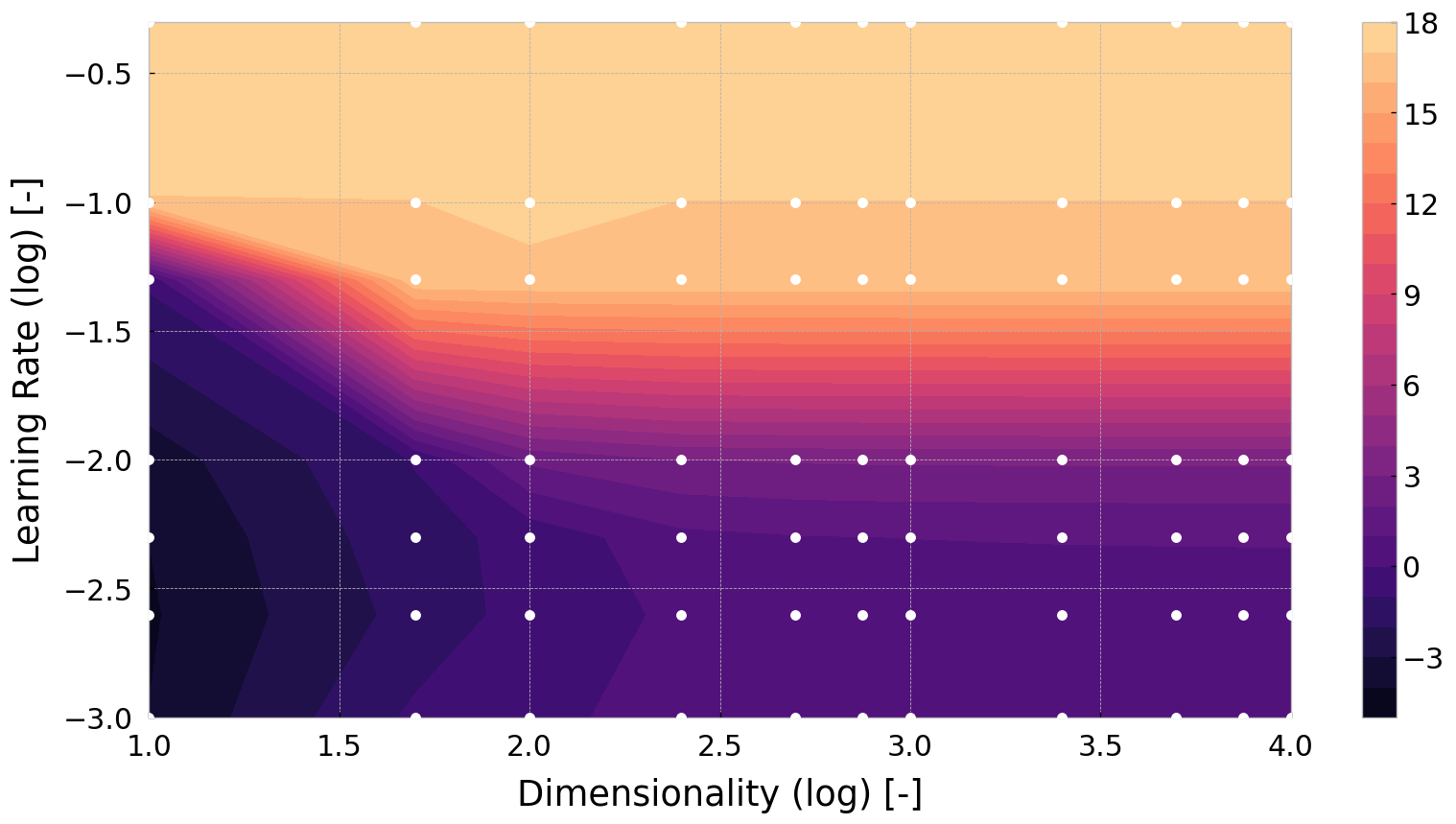}

        \caption{VPGs}
    \end{subfigure}

    \caption{Mean optimality gap after $2\times 10^5$ training iterations. The
    $z$-axis is given in a log scale and each point was computed from 16 random
    samples under the assumption of a Gamma distribution (optimality gap is
    lower bounded at zero).}\label{fig:search:stability}
\end{figure*}

\begin{figure*}[!ht]
    \centering
    \begin{subfigure}[b]{0.48\linewidth}
        \centering
        \includegraphics[width=\linewidth]{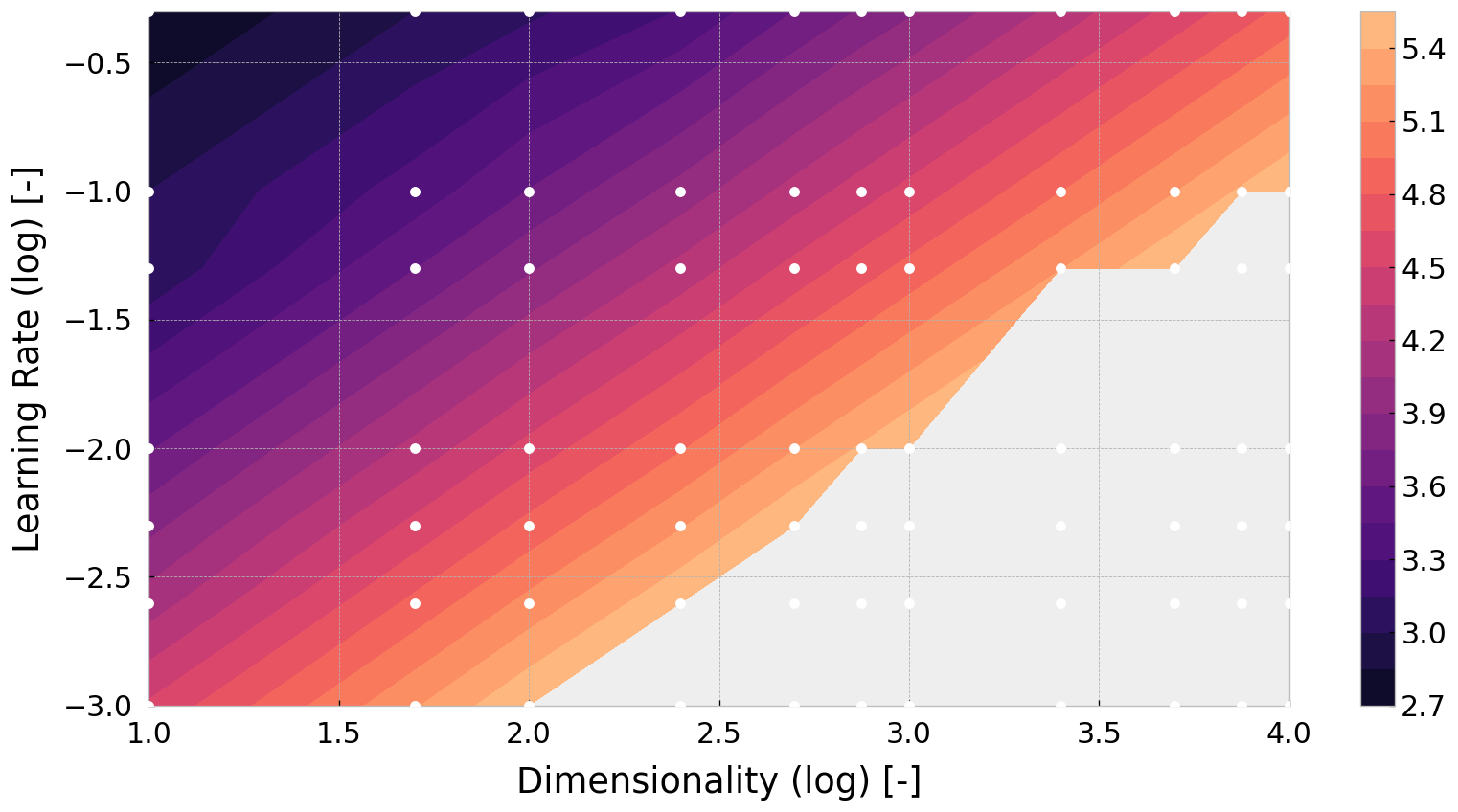}

        \caption{FPGs}
    \end{subfigure}
    \hfill
    \begin{subfigure}[b]{0.48\linewidth}
        \centering
        \includegraphics[width=\linewidth]{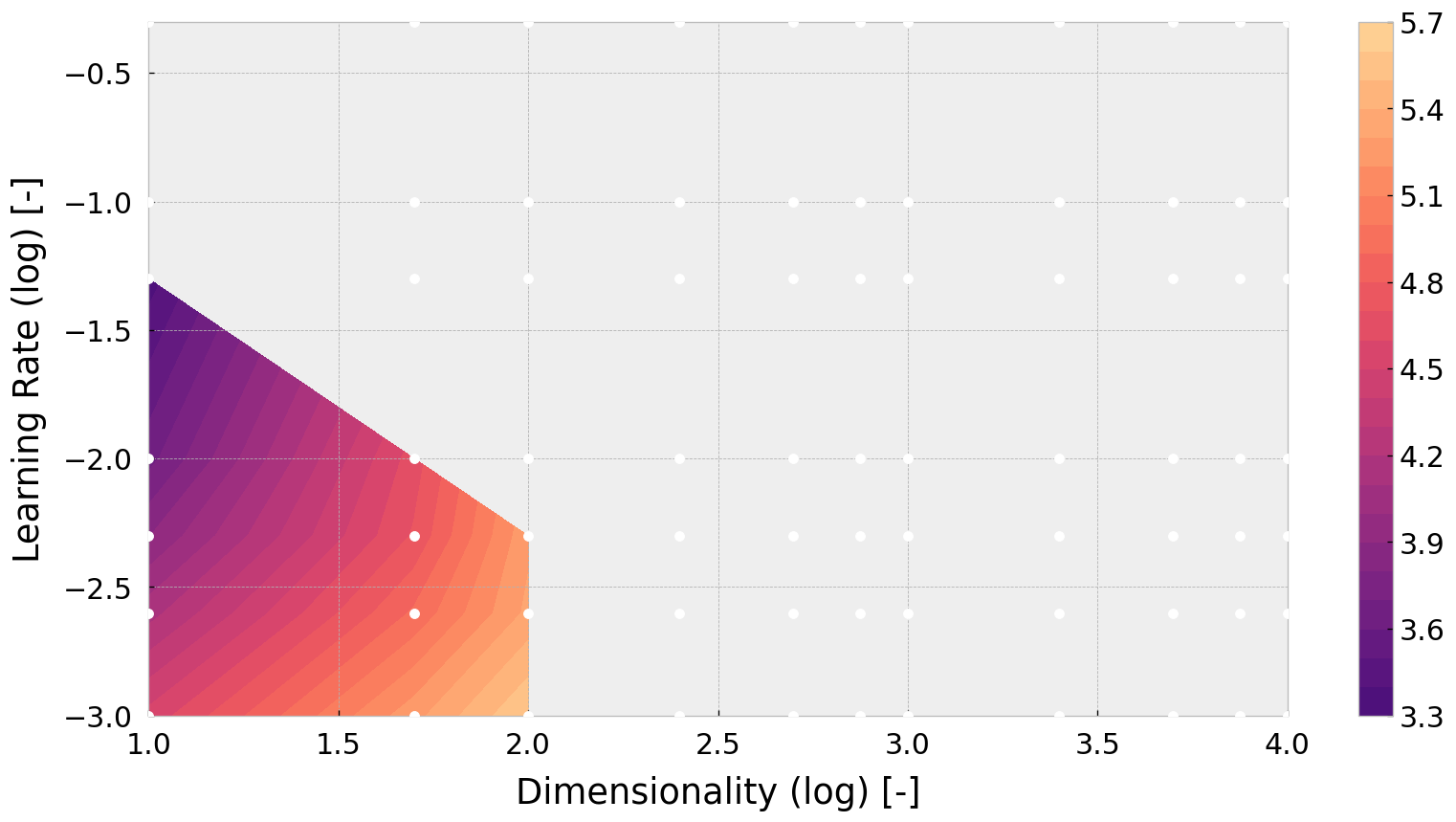}

        \caption{VPGs}
    \end{subfigure}

    \caption{Mean number of time steps required to reach an optimality gap of
        $0.1$, up to a limit of $5\times 10^5$ training iterations; see
        \autoref{fig:search:stability}. The $z$-axis is given in a log scale,
        and unfilled (grey) regions depict either divergence or a failure to
    terminate in the allotted time. Each point was computed from 16 random
    samples under the assumption of a Gamma distribution (time is lower bounded
    at zero).}\label{fig:search:rate}
\end{figure*}

\begin{figure*}[!ht]
    \centering
    \begin{subfigure}[b]{0.48\linewidth}
        \centering
        \includegraphics[width=\linewidth]{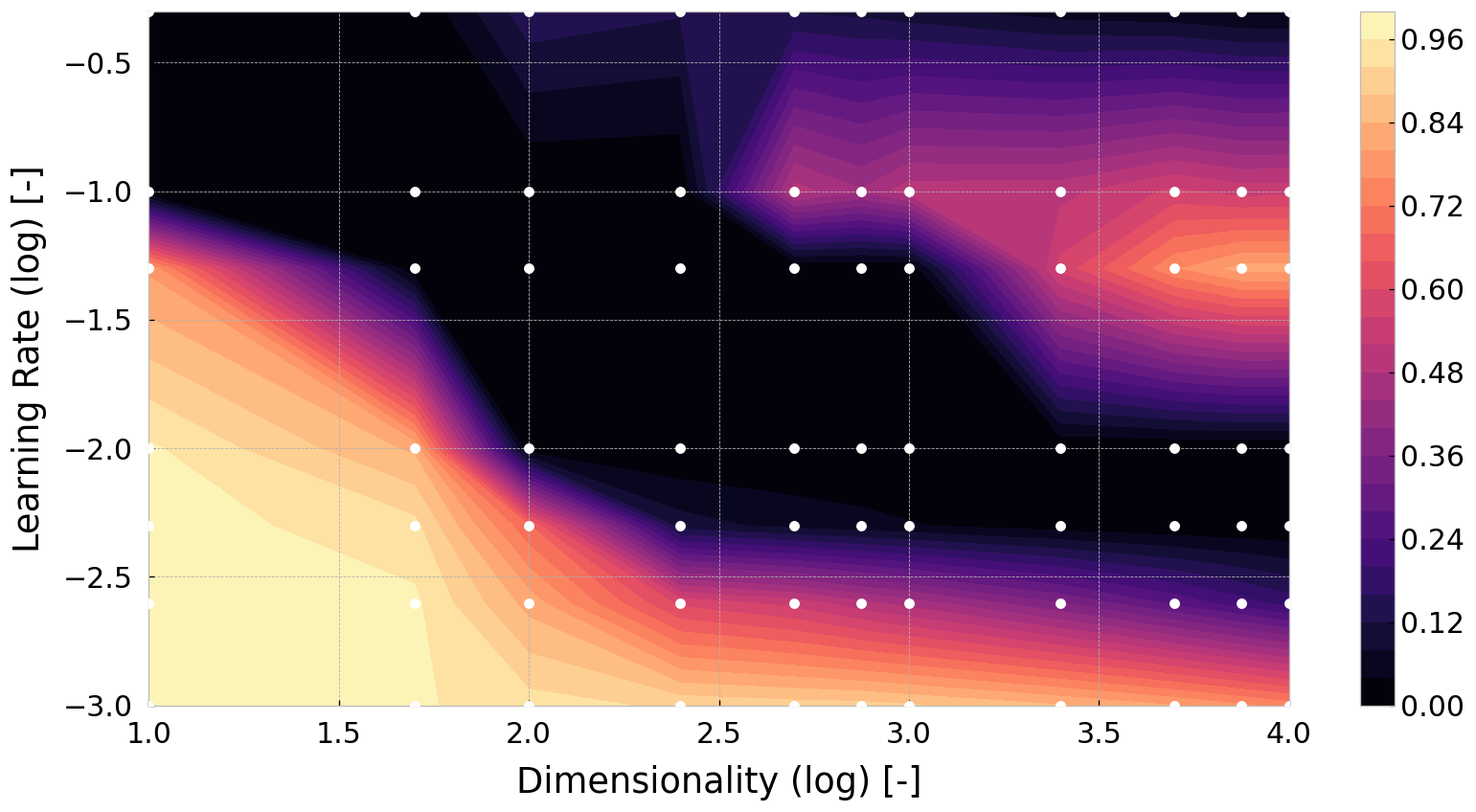}

        \caption{$\xi_n\!\left(\bm{a}\right) = \norm{\bm{a}}_2$ and $\lambda =
        0.01$}
    \end{subfigure}
    \hfill
    \begin{subfigure}[b]{0.48\linewidth}
        \centering
        \includegraphics[width=\linewidth]{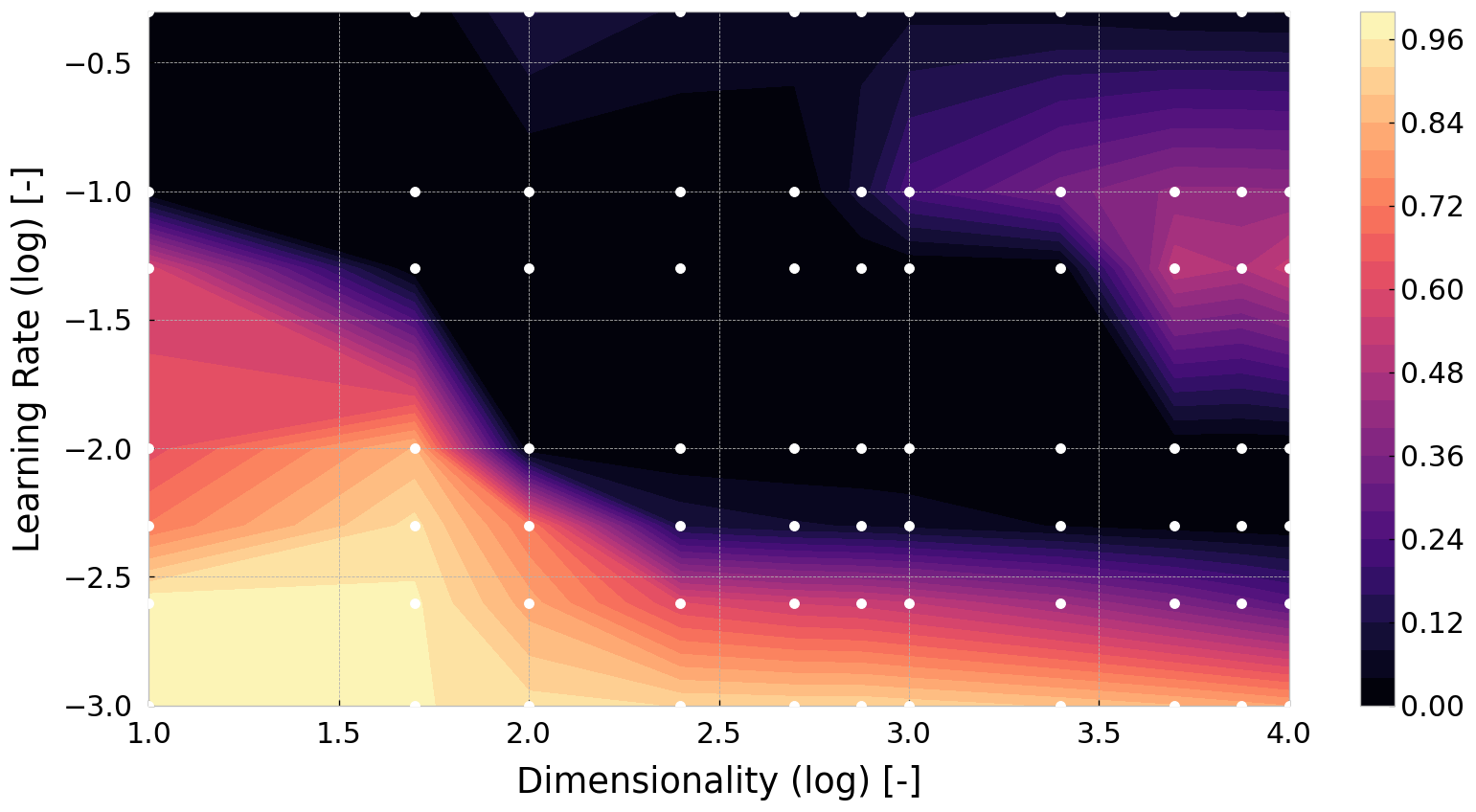}

        \caption{$\xi_{n/2}\!\left(\bm{a}\right) = \sqrt{\sum_{i=1}^{n/2}
        a^2_i}$ and $\lambda = 0.01$}
    \end{subfigure}

    \caption{Ratio between the mean optimality gaps for FPGs over VPGs after
        $2\times 10^5$ training iterations. Smaller values indicate that FPGs
        achieved a lower error relative to VPGs. Each point is the ratio of the
        two means, each computed using 16 random samples under the assumption
    of a Gamma distribution (optimality gaps are lower bounded at
    zero).}\label{fig:search:discrepancy}
\end{figure*}

\section{Traffic Systems}\label{app:traffic}
The traffic experiment were kept as close as possible to the benchmark specification for the grid problem provided by \textsf{Flow}~\cite{wu:2017:flow}. In particular, we based the code of the ``examples/exp\_configs/rl/multiagent/multiagent\_traffic\_light\_grid.py'' and ``examples/exp\_configs/rl/signleagent/singleagent\_traffic\_light\_grid.py'' files on commit ID 4e47f7a. The only changes that were made were to update the topology of the grid (i.e. $3 \times 3$ and $2 \times 6$), and to unify the reward function. We outline all the specific details below.

\paragraph{Reward functions.}
In order to unify the reward function across domains we implemented a custom variant of the ``mean delay'' case that worked for single- or multi-agent approaches. In particular, we changed the summation to only consider a subset of the edges in the network which allowed for localised computation. This can be done very easily in the Flow framework.

\paragraph{Traffic system parameters.}
The traffic intersection problem was instantiated with either a $3\times 3$ topology, or a $2\times 6$ topology, depending on the experiment. In all cases, an edge inflow of 300 was used, with initial speed of 30. The inner edges were given a length of 300, with the final edge in a route having length 100, and starting edge having length 300. Cars were created using the SimCarFollowingController, and SumoCarFollowingParams with a minimum gap of 2.5, maximum speed of 30, decelleration rate of 7.5 and ``right of way'' speed mode. The environment itself was initialised with target velocity of 50, switch time of 3, number of locally observed cars at 2, ``actuated'' TL type, and 4 locally observed edges.

\paragraph{Learning hyperparameters.}
In all cases we leveraged RLLIB's implementation of PPO with GAE~\cite{liang:2018:rllib} using discount factor of 0.999, a Monte-Carlo interpolation rate of $\lambda = 0.97$, KL-target of $0.02$, value function clipping bound at $10^4$, and learning rate of $5\times 10^{-4}$. The policy was parameterised using a three-layer neural network with 32 units at each of the three hidden layers. A total of 50 CPUs were used, each generating a single rollout at each iteration with a horizon of 400 steps.

\end{document}